\definecolor{newcolor}{rgb}{0.5,0,1}
\newcommand{\scheme}{SecEA\xspace}
\theoremstyle{remark}
\newcolumntype{I}{!{\vrule width 2pt}}
\newtheorem{theorem}{Theorem}
\newtheorem{lemma}{Lemma}
\theoremstyle{remark}
\newtheorem{Remark}{Remark}
\DeclareMathAlphabet{\mathpzc}{OT1}{pzc}{m}{it}
\definecolor{newcolor}{rgb}{0.5,0,1}
\def\BibTeX{{\rm B\kern-.05em{\sc i\kern-.025em b}\kern-.08em
    T\kern-.1667em\lower.7ex\hbox{E}\kern-.125emX}}
\begin{document}

\title{Secure Embedding Aggregation for Federated Representation
Learning}

% \author{\IEEEauthorblockN{Jinbao Zhu$^{1}$ and Songze Li$^{1,2}$}
% \IEEEauthorblockA{$^{1}$IoT Thrust, 
% The Hong Kong University of Science and Technology (Guangzhou)\\
% $^{2}$Department of Computer Science and Engineering, 
% The Hong Kong University of Science and Technology\\
% E-mail: \{jbzhu, songzeli\}@ust.hk}
% }
\author{\IEEEauthorblockN{}}
\author{\IEEEauthorblockN{Jiaxiang Tang$^{1,2}$, Jinbao Zhu$^{2}$, Songze Li$^{1,2}$, Lichao Sun$^{3}$}
\IEEEauthorblockA{$^{1}$The Hong Kong University of Science and Technology
\\ $^{2}$The Hong Kong University of Science and Technology (Guangzhou)\\
$^{3}$Lehigh University\\
E-mails: jtangbe@connect.ust.hk, jbzhu@ust.hk, songzeli@ust.hk, lis221@lehigh.edu}
}

\maketitle

\begin{abstract}
We consider a federated representation learning framework, where with the assistance of a central server, a group of $N$ distributed clients train collaboratively over their private data, for the representations (or embeddings) of a set of entities (e.g., users in a social network). Under this framework, for the key step of aggregating local embeddings trained privately at the clients, we develop a secure embedding aggregation protocol named \scheme, which leverages all potential aggregation opportunities among all the clients, while providing  privacy guarantees  for the set of local entities and corresponding embeddings \emph{simultaneously} at each client, against a curious server and up to $T < N/2$ colluding clients. 
%As the first step of \scheme, the federated learning system performs a private entity union, for each client to learn all the entities in the system without knowing which entities belong to which clients. In each aggregation round, the local embeddings are secretly shared among the clients using Lagrange interpolation, and then each client constructs coded queries to retrieve the aggregated embeddings for the intended entities.
\end{abstract}

\begin{IEEEkeywords}
Federated Representation Learning, Secure Embedding Aggregation, Entity Privacy, Embedding Privacy.
\end{IEEEkeywords}

\section{Introduction}

Federated learning (FL)~\cite{mcmahan2017communication,FL3} is an emerging privacy-preserving collaborative learning paradigm. With the help of a central server, a group of distributed clients collaboratively train a high-performance global model without revealing their private data. 
Recently, FL framework is applied to federated representation learning (FRL) \cite{wu2021fedgnn,chen2020fede,fedgraphnn},
%We consider applying the framework FL to federated representation learning (FRL), 
in which the goal is to train good representations (or embeddings), for each entity (e.g., users in a social network), over the private data distributed on the clients. A typical training round of an FRL protocol consists of the following steps: (i) each client trains the local embedding for each of its entities using its private data; (ii) all the clients send their trained local embeddings to the server; (iii) the server aggregates the local embeddings from different clients with the same entity into a global embedding; and (iv) the server sends the global embeddings back to the clients for the training of the next round. 

Aggregating embeddings of the same entities over all clients helps to 
% As the embeddings of the same entities over all clients are aggregated in the FRL framework, this can be leveraged to 
enhance the embedding quality and the learning performance, for a wide range of representation learning tasks (e.g., recommendation system \cite{wu2021fedgnn,chai2020secure}, social network mining \cite{kolluri2021private}, and knowledge graph \cite{chen2020fede}).
%The embedding aggregation in the FRL framework can help to improve the embedding quality and the learning performance.
To this end, FRL first needs to align the local entities of the clients, and then exchanges embeddings to perform aggregation for each entity.
% such that there are opportunities for aggregating the embeddings of the same entities, and then exchanges embeddings to finish aggregation.
However,
%However, as FRL requires entity alignment and embedding exchanges, 
during the embedding aggregation process, the curious server and clients can potentially infer the local entities and their embeddings of the victim clients, which would lead to leakage of the victim clients' local datasets. 
To protect the privacy of clients' local entities, the current state-of-the-art approach is for the FRL system to first privately agree on the set of entities that are common to all clients, using private set intersection (PSI) primitives (see, e.g.,~\cite{psi:5,psi:2}). Next, for each common entity existing on all clients, the clients securely aggregate their local embeddings, using secure aggregation protocols that mask the embeddings with random noises (see, e.g.,~\cite{SecureFL:1,SecureFL:3,SecureFL:5,SecureFL:4,jahani2023swiftagg+,jahani2022swiftagg,shao2022dres}). However, with the idea of aggregating embeddings of entities common to \emph{all} clients, PSI-based approaches suffer from  1) privacy leakage: the existence of the common entities at all other clients is known at each client; and 2) performance degradation: aggregation opportunities among \emph{subsets} of clients who share common entities are not leveraged.
% still leak entity information that the existence of the common entities at all other clients is known at each client. Also, PSI-based embedding aggregation is sub-optimal in performance as it does not take full advantage of all possible aggregation opportunities across the clients.  

% \begin{figure}[t]
% \centering
% \includegraphics[width = 0.48\textwidth]{SecEA.pdf}
% %\includegraphics[scale=0.6]{fede_result/Flowchart.pdf}
% \caption{The proposed \scheme protocol allows clients to securely aggregate their entity embeddings, without revealing local entities and global and local embeddings to the server, and the other clients.
% % \emph{Private entity union}: each client privately obtains the union of the entity sets of all clients.
% % \emph{Secure embedding sharing}: each client creates local updates of its entity embeddings and secret shares them with the other clients.
% % \emph{Private embedding aggregation retrieval}: each client privately retrieves desired embedding aggregations from other clients.
% }
% \vspace{-0cm}
% \label{fig:overview}
% \end{figure}

In this paper, we propose a novel secure embedding aggregation protocol, named \scheme, which \emph{simultaneously} provides entity privacy and embedding privacy for FRL, and overcomes both shortcomings of PSI-based approaches. To address the privacy challenges in the secure embedding aggregation problem, \scheme utilizes techniques from Lagrange multi-secret sharing~\cite{beimel2011secret,yu2019lagrange}, private information retrieval (PIR) \cite{chor1995private,sun2017capacity,banawan2018capacity,zhu2019new,ulukus2022private}, and Paillier's homomorphic encryption \texttt{PHE} \cite{paillier}.
% Our \scheme protocol builds on the connection between the secure embedding aggregation problem and private information retrieval (PIR) problem using Paillier's homomorphic encryption \texttt{PHE} \cite{paillier}. 
A PIR protocol allows a client to retrieve an interested item (or embedding) from a set of databases without revealing to the databases which item is being retrieved, thus it can be used as a building block to protect entity privacy while leveraging the aggregation opportunities among subsets of clients. 
%for the secure embedding aggregation problem.

%Moreover, to protect the privacy on the common entities at all other clients and complete the \emph{average} embedding aggregation, we carefully introduce indicator variables that indicate the existence of entities at each client. 

Specifically, in our \scheme protocol, the FRL system first performs a one-time private entity union operation, such that each client learns the collection of the entities existing on all clients. 
In each global training round, to compute the average embedding aggregation, each client first expands each of its local embedding vectors to include an indicator variable that indicates the existence of an entity.  
Then each client secret shares the expanded embedding vectors with the other clients, such that secret shares of all aggregated embeddings can be obtained at all clients, 
% are distributedly stored at all the clients in a secure manner, 
which is compatible with the symmetric PIR problem from secure MDS-coded storage system \cite{zhu2021multi,zhu2022symmetric}.
To privately obtain the average embedding for a local entity, each client sends a \emph{coded} query, 
to every other client, who returns a \texttt{PHE}-encrypted response through the server. 
% for the query, where the response is sent back to the client using \texttt{PHE} through the server.
To further protect entity privacy, %on the common entities
% at all other clients and complete the average embedding aggregation, 
the server adds a random noise on each response, such that the client can recover the average embedding of the intended entity, without knowing how many other clients also having this entity.
% from these \texttt{PHE} responses, without revealing the entity privacy. 

A related problem of private federated submodel learning (PFSL) has recently been studied in \cite{FSL:6,FSL:3,vithana2022private,FSL:5,FSL:2}, where a client would like to update one of many submodels, while keeping the submodel index and the updates private. While both utilize techniques from solving PIR problems to protect the privacy of model index (entity) and model update (embedding), our proposed \scheme protocol focuses more on privately aligning local entities of clients and aggregating \emph{average} embeddings of the same entities across all clients, which are not required in the PFSL problem.

\noindent {\bf Notation.} For two integers $m \leq n \in \mathbb{Z}$, we define $[m] \triangleq \{1,\ldots,m\}$ and $[m,n]\triangleq  \{m,m+1,\ldots,n\}$. 
%Let $\mathbb{F}$ denote a finite field.

\section{Problem Formulation}\label{problem statement}
Consider a representation learning task with a dataset $\mathcal{D} = (\mathcal{E},\mathcal{X})$, where $\mathcal{X}$ is a collection of data points (e.g., user information in a social network), and $\mathcal{E}$ is the set of entities of the data points in ${\mathcal{ X}}$ (e.g., IDs of the users). %and $\mathcal{X}$ denotes the collection of the data points (e.g., friends of the social network users).  %There exists a one to one projection from $\mathcal{E}$ to $\mathcal{X}$, 
%For each $e \in {\mathcal {E}}$, we denote the unique data point in ${\mathcal {X}}$ who has entity $e$, as $X_e$. 
The goal is to train a collection of embedding vectors
$\mathcal{H}=\{\mathbf{h}_{e}: e\in\mathcal{E}\}$, by minimizing some loss function $\mathcal{L}(\mathcal{ D}, {\mathcal {H}})$, where $\mathbf{h}_e \in \mathbb{F}^d$ denotes the corresponding embedding vector of length~$d$ for each entity $e \in {\mathcal {E}}$, and $\mathbb{F}$ is some sufficiently large finite field.\footnote{As secure computation protocols are built upon cryptographic primitives that carry out operations over finite fields, we consider each element of an embedding vector to be from a finite field $\mathbb{F}$.}

For an FRL system consisting of a central server and $N$ clients, each client executes the above task on its local data. Specifically, for each $n \in [N]$, client $n$ has a local dataset $\mathcal{D}_n=(\mathcal{E}_n, \mathcal{X}_n)$ consisting of a set of local entities $\mathcal{E}_n$ and a set of local data points $\mathcal{X}_n$.
The entity sets across clients may overlap arbitrarily, i.e., $\mathcal{E}_n\cap\mathcal{E}_{v}\ne\varnothing$ for $n \ne v$.
%Denote the $i$-th entity of client $n$ by $h_{n,i}\in\mathcal{H}_n$ and
In each round $t$ of FRL, with the knowledge of global embedding $\mathbf{h}_{e}^{(t-1)}$ for each $e\in\mathcal{ E}_n$, client $n$ trains its local embeddings over $\mathcal{ D}_n$:
\begin{equation}
\label{eq:local update}
\{\mathbf{h}_{n,e}^{(t)}:e \in \mathcal{ E}_n\}=\arg\min_{\mathcal{H}_n}\mathcal{L}(\mathcal{D}_n,\mathcal{H}_n),
\end{equation}
where $\mathcal{H}_n=\{\mathbf{h}_{e}^{(t-1)},e\in\mathcal{E}_n\}$, and $\mathbf{h}_{n,e}^{(t)}$ denotes the updated local embedding of entity $e$ at client $n$ in round $t$. Having updated their local embeddings, the $N$ clients communicate these embeddings with each other, through the central server,
% are communicated over the federated system,
such that for each $n\in [N]$ and each $e \in \mathcal{ E}_n$, client $n$ obtains an updated global embedding of entity $e$, which is computed by %\emph{accurately} 
averaging the local embeddings of $e$ from all the clients who have $e$ locally. More precisely, %for each client $n$, and 
for each entity $e$, the global embedding $\mathbf{h}_{e}^{(t)}$ for the next round is computed as
\begin{eqnarray}\label{learning:aggregation}
\mathbf{h}_{e}^{(t)}=\frac{\sum_{v\in[N]}\mathbbm{1}(e\in\mathcal{E}_{v})\cdot\mathbf{h}_{v,e}^{(t)}}{\sum_{v\in[N]}\mathbbm{1}(e\in\mathcal{E}_{v})}, \label{eq:task}
\end{eqnarray}
where $\mathbbm{1}(x)$ is the indicator function that returns $1$ when $x$ is true and $0$ otherwise. %Throughout the training process, 
The global embeddings of the entities are updated iteratively until convergence.

\noindent {\bf Threat Model.} We consider \emph{honest-but-curious} adversaries, which is the common model adopted to study privacy vulnerabilities in FL systems \cite{leakage:3,SecureFL:4,passive2}. Specifically, corrupted parties (clients and the server) will faithfully follow the learning protocol but will try to infer a client's private information including its entity set and data samples. The adversary can corrupt the server or multiple clients, but not the server and clients simultaneously, i.e., the server does not collude with clients to infer private information of other clients. Moreover, we consider a static adversary, who corrupts the same parties in each round.

\noindent {\bf Security Goals.}
In the above-described FRL framework, we focus on two types of data privacy under the threat model: entity privacy and embedding privacy. More concretely, given a security parameter $T < N$, an embedding aggregation protocol is considered \emph{$T$-private} if the following requirements are simultaneously satisfied:

\begin{itemize}
  \item \emph{Entity-privacy.} The entity set of any individual client must
be kept private from the server and the remaining clients,
even if any up to $T$ clients collude to share information
with each other. In other words, the server or any subset
of $T$ colluding clients learn nothing about which entities
are owned by each of the other clients.\footnote{Note that when an entity $e$ is only owned by corrupted clients, entity privacy leakage is inevitable, i.e., the colluding clients owning entity $e$ know that other clients do not have it by comparing the global embedding of $e$ with their local embedding average. WLOG, we discard this corner case throughout the paper.}
%For each $e\in\mathcal{E}=\bigcup_{n\in[N]}\mathcal{E}_n$, adversary should not know any information of $\mathcal{U}_e\backslash\mathcal{C}$, where $\mathcal{U}_e=\{n:e\in\mathcal{E}_n\}$, except that $\mathcal{U}_e\backslash\mathcal{C}=\emptyset$.

  \item \emph{Embedding-privacy.} Any subset of up to $T$ colluding clients learn nothing about the local embeddings of the other clients, beyond the embedding aggregations of the colluding clients' local entities. Besides, the corrupted server learns nothing about the local embedding of all the clients.
  
  % the entity set of any individual client must be kept private from the server and the remaining clients, even if any up to $T$ clients collude to share information with each other. In other words, the server or any subset of $T$ colluding clients learn nothing about which entities are owned by each of the other clients.
  % \item \emph{Embedding-privacy:} 1) the server learns nothing about any local embedding of any client; 2) any subset of up to $T$ colluding clients learn nothing about the local embeddings of the other clients, beyond the embedding aggregations of the colluding clients' local entities.
\end{itemize}

% We consider that each client can \emph{accurately} complete the global aggregation of its each entity by \eqref{learning:aggregation}. 

% When the number of clients that own the entity $e$ is $1$ for some entity $e$,  privacy leakage of the entity $e$ is inevitable, i.e., the client owned the entity knows that other clients do not have it by comparing the global embedding of $e$ with its local embedding. Without loss of generality, we discard this trivial case throughout this paper.

The goal of this paper is to design a provably secure embedding aggregation protocol, for general FRL tasks.

% \begin{Remark}
% We consider that each client can accurately complete the global aggregation of its each entity. 
% When the numbers of clients owned  the entity $e$ is $1$ for each entity $e$,  some leakage of entity privacy is inevitable, i.e., the client owned the entity knows that other clients do not have it, since the client can compare the global embedding of the entity with its local embedding.  
% \end{Remark}

\section{\scheme Protocol}
Our proposed \scheme protocol consists of two main components: \emph{private embedding sharing}, and \emph{private embedding aggregation retrieval}. %as shown in Figure \ref{Flowchart}. 
% Before the training starts, the FL system performs a one-time private entity union operation, such that each client can privately obtain the union of the entity sets of all clients, without knowing which entities are owned by which other client. Within each training round, after obtaining the updated local embeddings, each client secret shares them with the other clients using Lagrange polynomial interpolation~\cite{yu2019lagrange}. After the private embedding sharing, each client privately retrieves desired embedding aggregations from other clients, without revealing the requested entities and gaining additional information on embeddings of unintended entities. 
% In the rest of this section, we present the proposed \scheme protocol in the two major components 
We present the general protocol and illustrate its core ideas via a simple example.

% \subsection{Private Entity Union}\label{sec:peu}
% Before the training starts, the learning system executes a private entity union protocol in \cite{PSU}, for each client and the server to obtain the union of the entity sets from all clients privately (i.e., $\bigcup_{n\in[N]}\mathcal{E}_n$), without knowing the entity set of any individual client. 
% We leverage the private set union protocol \cite{PSU} to get the union $\mathcal{E}=\bigcup_{n\in[N]}\mathcal{E}_n$ without knowing each $\mathcal{E}_n$.

\subsection{Private Embedding Sharing}
%All the clients align their entities after performing PEU protocol.
% In this subsection, we further align the embedding of all the entities on the clients, such that each client can obtain secure encoded data of the global embedding of all the entities. This can be completed by securely encodes the local updated embedding of each entity and then creates secret sharing of the local updated embedding to the other clients.

%Each client $n$ uses its local data $\mathcal{X}_n$ and the current global embedding to create a local update of its entity embedding
%\begin{eqnarray}
%\mathbf{h}_{n,e}=g(\mathbf{h}_{n,e},\mathcal{X}_n),\quad\forall\,e\in\mathcal{E}_n,
%\end{eqnarray}
%where we omit the iteration index $t$ for simplicity.

Before the training starts, all clients execute a one-time private entity set union protocol proposed in \cite{PSU}, for each client and the server to privately obtain the union of the entity sets from all clients, without knowing the entity set of any individual client.\footnote{Here all communications between clients through the server are encrypted using one-time-pad private key encryption, so no information about clients' entities is leaked to the server.} 
Let the global entity set $\mathcal{ E} = \bigcup_{n\in[N]}\mathcal{E}_n = \{e_1,\ldots,e_M\}$, where $M$ is the total number of distinct entities across all clients.
% $M\triangleq |\mathcal{E}|$ denote the total number of distinct entities across all clients.
% W.L.O.G., assume $\{e_1,e_2,\ldots,e_M\}$ is the $M$ distinct entities in $\bigcup_{n\in[N]}\mathcal{E}_n$,
After the initial private entity union operation, the set $\mathcal{ E}$ is known to all clients.
%Thus, for any $m\in[M]$, each client $n$ will know if it owns the entity $e_m$.

During each training round, each client $n$ computes a set of local embeddings $\{\mathbf{h}_{n,e_m}:e_m \in \mathcal{ E}_n\}$ as in \eqref{eq:local update}, where we omit the round index $t$ for brevity.
To proceed, we expand and redefine the local embedding ${\mathbf{h}}_{n,e_m}$ of each entity $e_m$ at each client $n$ as a vector $\Tilde{\mathbf{h}}_{n,e_m}$ with dimension $d+1$, given by
%each client $n$ first appends one unit element to each of its locally trained embeddings, expanding the embedding of each entity $e_m \in \mathcal{ E}_n$ to $\Tilde{\mathbf{h}}_{n,e_m}=(\mathbf{h}_{n,e_m},1)$ of dimension of $d+1$.
%On the other hand, for each entity $e_m\notin\mathcal{E}_n$ that has not been encountered locally, client $n$ generates its embedding with same dimension $d+1$ and all the elements being $0$.
%Consequently, for any $m\in[M]$, each client $n$ locally possesses an expanded embedding $\Tilde{\mathbf{h}}_{n,e_m}$ of entity $e_m$ with dimension $d+1$, such that
\begin{eqnarray}\label{12345}
\Tilde{\mathbf{h}}_{n,e_m}=
\left\{\begin{array}{@{}ll}
(\mathbf{h}_{n,e_m},1),  & \text{if}~e_m\in\mathcal{E}_n  \\
\mathbf{0},&\text{otherwise}
\end{array}\right..
\end{eqnarray}

To perform secure embedding aggregation, each client secret shares its expanded embeddings with other clients using Lagrange encdoing~\cite{yu2019lagrange}. For reducing the communication cost among the clients, given the security parameter $T$, we select a partitioning parameter $K$, such that,
\begin{eqnarray}\label{partition number}
K=\left\lfloor\frac{N+1}{2}\right\rfloor-T.
\end{eqnarray}
Then, for each $m\in[M]$, client $n$ evenly partitions %its expanding embedding 
$\Tilde{\mathbf{h}}_{n,e_m}$ into $K$ sub-vectors of dimension $\frac{d+1}{K}$, i.e., 
\begin{IEEEeqnarray}{c}\label{partion:embeddsd}
    \Tilde{\mathbf{h}}_{n,e_m}=\big(
    \Tilde{\mathbf{h}}_{n,e_m}^{1},\ldots,\Tilde{\mathbf{h}}_{n,e_m}^{K}
\big).
\end{IEEEeqnarray}
% which is represented as
% \begin{eqnarray}\label{file symbols}
% \Tilde{\mathbf{h}}_{n,e_m}=\big(
%     \Tilde{\mathbf{h}}_{n,e_m}^{1},\ldots,\Tilde{\mathbf{h}}_{n,e_m}^{K}
% \big).
% \end{eqnarray}
% Hence, client $n$ can accomplish the aggregation of embedding $e_m$ once it recovers the summations of the $K$ sub-vectors as follows
% % Accordingly, the desired sum $\sum_{v\in[N]}\Tilde{\mathbf{h}}_{v,e_m}$ is represented as
% \begin{eqnarray}\label{desired data}
% \sum\limits_{v\in[N]}\Tilde{\mathbf{h}}_{v,e_m}=\left(
%     \sum\limits_{v\in[N]}\Tilde{\mathbf{h}}_{v,e_m}^{1},\ldots,\sum\limits_{v\in[N]}\Tilde{\mathbf{h}}_{v,e_m}^{K}
% \right).
% \end{eqnarray}
% Thus, for any $e_m\in\mathcal{E}_n$, the client $n$ can straightly finish the global embedding aggregation of the entity $e_m$ once it obtains all the sub-vectors above.

Let $\{\beta_{k},\alpha_{n}:k\in[K+T],n\in[N]\}$ be  $K+T+N$ pairwise distinct parameters known to the %federated learning 
system from $\mathbb{F}$. 
% We consider a set of public parameters known to the %federated learning 
% system, which consists of $K+T+N$ pairwise distinct elements from $\mathbb{F}$, denoted by $\{\beta_{k},\alpha_{n}:k\in[K+T],n\in[N]\}$. 
%These parameters are fixed across all training rounds and do not depend on entity embeddings. 
Each client $n\in [N]$, for each $m\in[M]$, samples independently and uniformly over $\mathbb{F}^{\frac{d+1}{K}}$, $T$ random noises $\mathbf{z}^{K+1}_{n,e_m},\mathbf{z}^{K+2}_{n,e_m},\ldots,\mathbf{z}^{K+T}_{n,e_m}$,  %and $\beta_{i,1},\beta_{i,2},\ldots,\beta_{i,K+X}$ be $K+X$ distinct elements from $\mathbb{F}_q_q$. Note that all $\{\beta_{i,j}:i\in[1:\lambda],j\in[K+X])\}$ are distinct of each other for convenience.
%Denote $\{\beta_{i,j}:i\in[\lambda],j\in[K+X]\}$ the $\lambda(K+X)$ distinct elements from $\mathbb{F}_q_q$.
and then constructs a polynomial $\varphi_{n,e_m}(x)$ of degree at most $K+T-1$ such that
\begin{eqnarray}\label{encoding:data}
\varphi_{n,e_m}(\beta_{k})=\left\{
\begin{array}{@{}ll}
\Tilde{\mathbf{h}}^{k}_{n,e_m},&\forall\, k\in[K]\\
\mathbf{z}^{k}_{n,e_m},&\forall\, k\in[K+1:K+T]
\end{array}\right..
\end{eqnarray}
% By Lagrange interpolation rule and the degree restriction,  $\varphi_{n,e_m}(x)$ can be uniquely expressed as
% \begin{eqnarray}
% \varphi_{n,e_m}(x)=\sum\limits_{i=1}^{K}\Tilde{\mathbf{h}}^{i}_{n,e_m}\cdot\prod_{j\in[K+T]\backslash\{i\}}\frac{x-\beta_{j}}{\beta_{i}-\beta_{j}}+\notag \\
% \quad\quad\quad\quad\quad\quad\quad\sum\limits_{i=K+1}^{K+T}\mathbf{z}^{i}_{n,e_m}\cdot\prod_{j\in[K+T]\backslash\{i\}}\frac{x-\beta_{j}}{\beta_{i}-\beta_{j}}.  \label{storage code}
% \end{eqnarray}

Then, for each $v \in [N]$, client $n$ shares the evaluation of $\varphi_{n,e_m}(x)$ at point $x=\alpha_v$ with client $v$.
% \footnote{Since all communication between clients is through the relay of the central server, to protect information on entities and embeddings from leaking to the server, clients send a masked version of their messages when communicating with each other.
% Specifically, each pair of clients $n,v$ agrees on a private seed $a_{n,v}$ unknown to the server using a Diffie-Hellman type key exchange protocol. When client $n$ wishes to send $\mathbf{y}_{n,v}$ to client $v$, the communication takes place in the following steps:
% 1) Client $n$ uploads $\tilde{\mathbf{y}}_{n,v}=\mathbf{y}_{n,v}+\textup{PRG}(a_{n,v})$ to the server using a pseudorandom generator (PRG) and then the server forwards the received $\tilde{\mathbf{y}}_{n,v}$ to client $v$.
% 2) Client $v$ decrypts the desired data $\mathbf{y}_{n,v}$ by performing $\tilde{\mathbf{y}}_{n,v}-\textup{PRG}(a_{n,v})=\mathbf{y}_{n,v}$.
% Similarly, in the following step of private embedding aggregation retrieval, all communication between the clients is performed following the above steps. We omit these masked noises for simplicity.} 
The secret shares sent by client $n$ to client $v$ across all $m\in[M]$ are given by
%Then, for each $v \in [N]$, client $n$ evaluates $\varphi_{n,e_m}(x)$ at point $x=\alpha_v$, and sends (the shares of $e_m$'s embedding) $\varphi_{n,e_m}(\alpha_v)$ to client $v$. Thus,  \textcolor{blue}{(the secret shares of $\Tilde{ \mathbf{h}}_{n,e_m}$)} sent by client $n$ to client $v$ is
\begin{IEEEeqnarray}{rCl}\label{stored data}
\mathbf{y}_{n,v}&=&\Big(\varphi_{n,e_1}(\alpha_v),\ldots,\varphi_{n,e_M}(\alpha_v)\Big).
\end{IEEEeqnarray}
Notably, since the sharing messages are sent through the relay of the central server, to protect information on entities and embeddings from leaking to the server, each client sends a masked version of its sharing messages using one-time-pad encryption.
Specifically, each pair of clients $n,v$ agree on a private seed $a_{n,v}$ unknown to the server using a Diffie-Hellman type key exchange protocol~\cite{diffie1976new}. When client $n$ wishes to send $\mathbf{y}_{n,v}$ to client $v$, the communication takes place in the following steps:
1) client $n$ uploads $\tilde{\mathbf{y}}_{n,v}=\mathbf{y}_{n,v}+\textup{PRG}(a_{n,v})$ to the server using a pseudorandom generator (PRG) generated one-time-pad, and then the server forwards the received $\tilde{\mathbf{y}}_{n,v}$ to client $v$.
2) client $v$ decrypts the desired data $\mathbf{y}_{n,v}$ by performing $\tilde{\mathbf{y}}_{n,v}-\textup{PRG}(a_{n,v})=\mathbf{y}_{n,v}$. 

After receiving and decrypting sharing messages from all the clients, client $v$ aggregates them to obtain the following.
% (the secret shares of embeddings aggregation)
\begin{IEEEeqnarray}{rCccl}
\mathbf{y}_{v}\!&\triangleq&\!\!\sum \limits_{n\in[N]}\!\mathbf{y}_{n,v}&=&
% \notag\\
% \quad\quad\quad\quad
\bigg(\sum\limits_{n\in[N]}\!\varphi_{n,e_1}(\alpha_v),\!\ldots\!,\!\!\sum\limits_{n\in[N]}\!\varphi_{n,e_M}(\alpha_v)\!\bigg)\!. \IEEEeqnarraynumspace\label{local stored222}
\end{IEEEeqnarray}

\subsection{Private  Embedding Aggregation Retrieval}
After the private embedding sharing, each client locally obtains secret shares of global embedding aggregations of all $M$ entities. To privately retrieve embedding aggregations for entities in $\mathcal{E}_n$ without revealing local entities, each client $n$ sends some queries to each other clients $v$ in a private manner. Client $v$ responds with some answers following the instructions of the received queries. Finally, client $n$ reconstructs the desired embedding aggregations from the answers. %The details are illustrated as follows.

%To retrieve the desired aggregation \eqref{desired data}, while ensuring that any $T$ colluding clients learn nothing about which ones out of the $M$ entities are owned by the remaining clients, each client $n$ first sends some queries to client $v$. Then the client $v$ responds some answers back to the client $n$ according to the received queries and data stored. Finally, client $n$ finishes the desired global aggregations from the answers. The details are illustrated as follows.

For an intended entity $e\in\mathcal{E}_n$ owned by client $n$, the client independently and uniformly generates $MT$ random variables $\{z_{n,e}^{m,K+1},\ldots,$ $z_{n,e}^{m,K+T}\}_{m\in[M]}$ from $\mathbb{F}$. Then, for each $m\in[M]$, the client constructs a query polynomial  $\rho_{n,e}^{m}(x)$ of degree $K+T-1$ 
such that  
\begin{IEEEeqnarray}{rCll}\label{symmetric:22}
\rho_{n,e}^{m}(\beta_{k})&=&\left\{
\begin{array}{@{}ll}
1, &\mathrm{if}\,\, e_m=e\\
0, & \mathrm{otherwise}
\end{array}
\right., ~\forall\, k\in[K],\label{query} \\
\rho_{n,e}^{m}(\beta_{k})&=&z_{n,e}^{m,k},
~\;\forall\, k\in[K+1:K+T]. %\notag
\end{IEEEeqnarray}
% Since the elements $\beta_{k},k\in[K+T]$ are pairwise distinct, the query polynomial $\rho_{n,e}^{m}(x)$ can be explicitly written as
% \begin{eqnarray}
% \rho_{n,e}^{m}(x)
% &=&\sum\limits_{i=K+1}^{K+T}z_{n,e}^{m,i}\cdot\prod\limits_{j\in[K+T]\backslash\{i\}}\frac{x-\beta_j}{\beta_i-\beta_j} \notag\\
% %\left(\prod\limits_{s\in[\lambda]}\frac{\alpha-\beta_{s,k}}{\alpha_{l}-\beta_{s,k}}\right)\left(\prod\limits_{v\in[T]\backslash\{l\}}\frac{\alpha-\alpha_v}{\alpha_{l}-\alpha_v}\right) \notag\\
% &&+\left\{
% \begin{array}{@{}l@{\;\;}l}
% \sum\limits_{i=1}^{K}\prod\limits_{j\in[K+T]\backslash\{i\}}\frac{x-\beta_j}{\beta_i-\beta_j},
% &\mathrm{if}\,\, e_m=e \\
% 0, & \mathrm{otherwise}
% \end{array}
% \right..  \label{symmetric:22} %\quad\forall\,u\in[\xi],j\in[\lambda]
% \end{eqnarray}

Next, for each $v \in [N]$, client $n$ evaluates the $M$ query polynomials $\{\rho_{n,e}^{m}(x):m\in[M]\}$ at $x=\alpha_{v}$, and sends them to client $v$ using one-time-pad encryption with another pair-wise private seed $a_{n,v}'$.
We denote the query sent from client $n$ to $v$, for retrieving the aggregation of entity $e\in\mathcal{E}_n$, as
\begin{eqnarray}\label{query:1}
\mathbf{q}_{n,v,e}=\left(\rho_{n,e}^{1}(\alpha_v),\ldots,\rho_{n,e}^{M}(\alpha_v)\right).
\end{eqnarray}
%denote the query vector sent by client $n$ to client $v$ for finishing the aggregation of the entity $e$ of client $n$.
Then, the encrypted query, $\Tilde{\mathbf{q}}_{n,v,e}=\mathbf{q}_{n,v,e}+\textup{PRG}(a_{n,v}')$, is sent to the client $v$ through the server. After decrypting the query received from client $n$, client $v$ takes the inner products of the query vector $\mathbf{q}_{n,v,e}$ and its locally stored data vector $\mathbf{y}_{v}$ \eqref{local stored222}, generating the response $A_{v,n,e}=\langle\mathbf{q}_{n,v,e}, \mathbf{y}_{v}\rangle$.
%  \begin{eqnarray}{rCl}\label{server:response}
%  A_{v,n,e}&=&\langle\mathbf{q}_{n,v,e}, \mathbf{y}_{v}\rangle, \label{answers}
% \end{eqnarray}
%Note that $A_{v,n,e}^1$ and $A_{v,n,e}^2$ are shares of aggregation of embeddings and indicators respectively, for intended entity $e$. 

To protect entity privacy and complete desired embedding averaging, we apply Paillier's homomorphic encryption \texttt{PHE} \cite{paillier} on the response, where the \texttt{PHE} encryption algorithm \texttt{PHE.Enc} and decryption algorithm \texttt{PHE.Dec} have the additive property such that $\texttt{PHE.Enc}(m_1, pk)\cdot\texttt{PHE.Enc}(m_2, pk)\!=\!\texttt{PHE.Enc}(m_1+m_2, pk)$ and $\texttt{PHE.Dec}(\texttt{PHE.Enc}(m_1+m_2,pk),sk)=m_1+m_2$ for any messages $m_1,m_2$ and  a public key/secret key pair $(pk,sk)$. Specifically, client $n$ generates a pair of public and private keys $(pk_n, sk_n)$, where $pk_n$ is public to the server and other clients. Then the client $v$ sends a \texttt{PHE}-encrypted version $\Tilde{A}_{v,n,e}=\texttt{PHE.Enc}(A_{v,n,e},pk_n)$ of the response $A_{v,n,e}$ to server.
% Then the client $v$ sends the response $A_{v,n,e}$ to client $n$. Note that, all communication between the clients must go through the center server in federated system, i.e., the original client first uploads data to the center server and then the server transmits to the destination client.

To prevent client $n$ from inferring any additional information about the embeddings of the entities that are not in $\mathcal{ E}_n$, the server also generates $\widetilde{K}=K+2T-1$ random noises $\{\mathbf{z}_{n,e}^{k}:k\in[\widetilde{K}]\}$ independently and uniformly from $\mathbb{F}^{\frac{d+1}{K}}$. Define a noise polynomial $\psi_{n,e}(x)$ of degree $2(K+T-1)$ such that
\begin{IEEEeqnarray}{c}
% \begin{aligned}
\psi_{n,e}(\beta_{k})=0,\quad 
\psi_{n,e}(\alpha_{k})=\mathbf{z}_{n,e}^{k},\quad k\in[\widetilde{K}]. 
% \end{aligned}
\label{symmetric:1345}
\end{IEEEeqnarray}
% The noise polynomial $\psi_{n,e}(x)$ is the form of
% % \begin{eqnarray}{l}
% % \psi_{n,e}(x)= \sum\limits_{i\in[K+2T-1]}\mathbf{z}_{n,e}^{i}\cdot \left(\prod\limits_{j\in[K]}\frac{x-\beta_{j}}{\alpha_{i}-\beta_{j}}\right)\notag\\
% % \quad\quad\quad\quad\quad\quad\quad\quad\quad\cdot\left(
% % \prod\limits_{l\in[K+2T-1]\backslash\{i\}}\frac{x-\alpha_{l}}{\alpha_{i}-\alpha_{l}}\right).  \label{symmetric:answer}
% % \end{eqnarray}
% \begin{eqnarray}\label{server:noise}
% \psi_{n,e}(x)\!=
% \!\!\! \sum\limits_{i\in[\Tilde{K}]}\!\!\!\mathbf{z}_{n,e}^{i}\! \left(\prod\limits_{j\in[K]}\!\frac{x-\beta_{j}}{\alpha_{i}-\beta_{j}}\!\!\right)\!\left(
% \prod\limits_{l\in[\Tilde{K}]\backslash\{i\}}\!\frac{x-\alpha_{l}}{\alpha_{i}-\alpha_{l}}\!\right). 
% \end{eqnarray}
Furthermore, to prevent client $n$ from learning the number of clients that owns the entity $e$, the server locally chooses a random noise $r_{n,e}$ from $\mathbb{F}$. Then the server adds the noise $r_{n,e}$ and the evaluation of $\psi_{n,e}(x)$ at $x=\alpha_v$ on the encrypted response $\Tilde{A}_{v,n,e}$, and generates $\Tilde{Y}_{v,n,e}$ for client $n$:
\begin{align}
 \Tilde{Y}_{v,n,e}&=(\Tilde{A}_{v,n,e})^{r_{n,e}}\cdot\texttt{PHE.Enc}(\psi_{n,e}(\alpha_{v}),pk_n) \label{answers:add} \\ &=\texttt{PHE.Enc}\Big(r_{n,e} A_{v,n,e}+\psi_{n,e}(\alpha_v),pk_n\Big),\notag
\end{align}
where the operations on vectors are performed element-wise,
and the last equality is due to the additive property of \texttt{PHE}. 

Client $n$ decrypts $\Tilde{Y}_{v,n,e}$ via its private key $sk_n$ and obtains $Y_{v,n,e}=r_{n,e} \cdot A_{v,n,e}+\psi_{n,e}(\alpha_v)$.
It is easy to check that the response $Y_{v,n,e}$ is the evaluation of the following response polynomial $Y_{n,e}(x)$ with degree $2(K+T-1)$ at point $x=\alpha_{v}$.
\begin{eqnarray}
Y_{n,e}(x)=r_{n,e} \sum\limits_{m=1}^{M}\rho_{n,e}^{m}(x)\cdot \sum\limits_{v'\in[N]}\varphi_{v',e_m}(x)+\psi_{n,e}(x). \label{answer:polynomial}
\end{eqnarray}

%Apparently, as $\psi_{n,e}(\beta_k)=0$ for all $k\in[K]$, $Y_{v,n,e}$ is still a secret share of $e$'s embedding aggregation over all clients.

%For any $m\in[M]$, $\varphi_{v',e_m}(x)$ is a polynomial of degree $K+T-1$ for any $v'\in[N]$, and the degree of polynomial $\rho_{n,e}^{m}(x)$ is $K+T-1$. Thus, $Y_{n,e}(x)$ is a polynomial of variable $x$ with degree $2(K+T-1)\leq N-1$ by \eqref{partition number}.
Recall that $\{\alpha_v\}_{v\in[N]}$ are distinct elements from $\mathbb{F}$.
Client $n$ can recover the polynomial $Y_{n,e}(x)$ from the received $N$ responses $(Y_{1,n,e},\ldots,Y_{N,n,e})=(Y_{n,e}(\alpha_1),\ldots,Y_{n,e}(\alpha_N))$ via polynomial interpolation as $2(K+T-1)<N$ by \eqref{partition number}.
For each $k\in[K]$, client $n$ evaluates $Y_{n,e}(x)$ at $x=\beta_{k}$ to obtain
\begin{IEEEeqnarray}{rCl}
Y_{n,e}(\beta_{k})&=&r_{n,e} \sum\limits_{m=1}^{M}\rho_{n,e}^{m}(\beta_k)\cdot \sum\limits_{v\in[N]}\varphi_{v,e_m}(\beta_k)+\psi_{n,e}(\beta_k)\notag \\
&\overset{(a)}{=}&r_{n,e}\sum\limits_{v\in[N]}\varphi_{v,e}(\beta_{k}) \overset{(b)}{=}r_{n,e}\sum\limits_{v\in[N]}\Tilde{\mathbf{h}}_{v,e}^{k},\label{evaluating:1}
\end{IEEEeqnarray}
where $(a)$ is due to \eqref{query} and \eqref{symmetric:1345}, and $(b)$ follows by \eqref{encoding:data}. Next, by \eqref{12345} and \eqref{partion:embeddsd}, we obtain
\begin{IEEEeqnarray}{l}
\Big(r_{n,e}\sum\limits_{v\in[N]}\Tilde{\mathbf{h}}_{v,e}^{1},\ldots,r_{n,e}\sum\limits_{v\in[N]}\Tilde{\mathbf{h}}_{v,e}^{K}\Big)= \notag\\
\quad\quad\Big(r_{n,e}\sum_{v\in[N]}\mathbbm{1}(e\in\mathcal{E}_{v})\cdot\mathbf{h}_{v,e},r_{n,e}\sum_{v\in[N]}\mathbbm{1}(e\in\mathcal{E}_{v})\Big).\IEEEeqnarraynumspace\label{rnoise}
\end{IEEEeqnarray}
Thus, client $n$ can correctly recover the global embedding of entity $e$, i.e., $\frac{\sum_{v\in[N]}\mathbbm{1}(e\in\mathcal{E}_{v})\cdot\mathbf{h}_{v,e}}{\sum_{v\in[N]}\mathbbm{1}(e\in\mathcal{E}_{v})}$, as in \eqref{learning:aggregation}.
%We point out that for client $n$ to obtain the average embedding of its entity $e \in \mathcal{ E}_n$ as in \eqref{eq:task}, it is sufficient to obtain the aggregation $r \sum_{v\in[N]}\Tilde{\mathbf{h}}_{v,e} = \Big(r\sum_{v\in[N]}\mathbbm{1}(e\in\mathcal{E}_{v})\cdot\mathbf{h}_{v,e},r\sum_{v\in[N]}\mathbbm{1}(e\in\mathcal{E}_{v})\Big)$ for any $r\in\mathbb{F}$. 
Finally, each client $n\in[N]$ repeats the above process for each entity $e \in \mathcal{ E}_n$.

% to retrieve the corresponding embedding aggregation $\sum_{v\in[N]}\Tilde{\mathbf{h}}_{v,e}$, and then computes the global embedding as in~\eqref{task222}.

\subsection{Illustrative Example}

% \begin{figure*}[htbp]
% %\vspace{-2.0em}
% \centering
% \includegraphics[width=0.8\linewidth]{protocol_example.png}
% %\includegraphics[scale=0.6]{fede_result/Flowchart.pdf}
% \caption{Illustration of \scheme in the example of 3 clients. (1) The system performs private entity union protocol to agree on the union of all the entity sets $\{e_1,e_2\}$.
% (2) Each client generates expanded embeddings of all entities $\Tilde{\mathbf{h}}_{n,e_m}$ and secret shares with the other clients. After that, each client aggregates the received shares $\mathbf{y}_n$.
% (3) client $1$ sends query $\mathbf{q}_v$ to client $v$, who responses a answer $A_v$ through the server. The server adds a noise to the answer and forwards $Y_v$ to client $1$. Client $1$ directly recovers the desired aggregation $\Tilde{\mathbf{h}}_{1,e_1}+\Tilde{\mathbf{h}}_{2,e_1}+\Tilde{\mathbf{h}}_{3,e_1}$.}
% \label{protocol:example}
% \end{figure*}

We illustrate the key ideas behind the proposed \scheme protocol through a simple example with $N=3$ and $K=T=1$.
Assume that the entire system contains $M=2$ entities, and their distributions onto the $3$ clients are $\mathcal{E}_1=\{e_1\},\mathcal{E}_2=\{e_2\}$ and $\mathcal{E}_3=\{e_1\}$, respectively.
% As shown in Figure \ref{protocol:example}, 
The proposed \scheme protocol operates in two phases as follows. %, i.e., Private Entity Union, Secure Embedding Sharing, and Private Embedding Aggregation Retrieval.

% \noindent\textbf{Private Entity Union.}
% The system executes the private entity union protocol, for the server and all $3$ clients to agree on the global set of entities %in the system  
% $\mathcal{ E}=\{e_1,e_2\}$. Here the server does not know the entity set of any individual client, and each client does not know the entity sets of the other clients. 
% % union of all entity sets $\mathcal{ E}=\bigcup_{n\in[3]}\mathcal{E}_n=\{e_1,e_2\}$.
% %Note that each client only learns the overall set of entities $\mathcal{ E}$ existing in the system, without knowing any information on the distributions of these entities across the clients.

\noindent\textbf{Private Embedding Sharing.}
The system executes the private entity union protocol, for the server and all $3$ clients to agree on the global set of entities %in the system  
$\mathcal{E}=\{e_1,e_2\}$. 
%Here the server does not know the entity set of any individual client, and each client does not know the entity sets of the other clients. 
In each global, after local updating, the expanding embeddings are given by
\begin{alignat*}{3}
&\Tilde{\mathbf{h}}_{1,e_1}=(\mathbf{h}_{1,e_1},1),&\;&\Tilde{\mathbf{h}}_{2,e_1}=(\mathbf{0},0),&\;&\Tilde{\mathbf{h}}_{3,e_1}=(\mathbf{h}_{3,e_1},1);\\ &\Tilde{\mathbf{h}}_{1,e_2}=(\mathbf{0},0),
 &\;&\Tilde{\mathbf{h}}_{2,e_2}=(\mathbf{h}_{2,e_2},1), &\;&\Tilde{\mathbf{h}}_{3,e_2}=(\mathbf{0},0).
\end{alignat*}
% For each entity $e_m$, client $n\in[3]$ appends one unit element to its updated embedding $\Tilde{\mathbf{h}}_{n,e_m}=(\mathbf{h}_{n,e_m},1)$ if the client owns $e_m$, and  generates an embedding with all elements being zero $\Tilde{\mathbf{h}}_{n,e_m}=\mathbf{0}$ otherwise.
% In our \scheme, to privately complete the global embedding aggregation, each client computes the aggregation of the expanded embeddings across all clients $\sum_{v\in[3]}\Tilde{\mathbf{h}}_{v,e}$, for each local entity $e$. For instance, client $1$ can complete the global embedding $\frac{\mathbf{h}_{1,e_1}+\mathbf{h}_{3,e_1}}{2}$ of its entity $e_1$ by computing
% \begin{eqnarray}
% \sum\limits_{v\in[3]}\Tilde{\mathbf{h}}_{v,e_1}=(\mathbf{h}_{1,e_1}+\mathbf{h}_{3,e_1},2).\label{example:partitions}
% \end{eqnarray}

We select $\{\alpha_1,\alpha_2,\alpha_3\}=\{3,4,5\}$ and $\{\beta_1,\beta_2\}=\{1,2\}$. Each client $n\in [3]$ creates the following masked shares ${\bf y}_{n,v}^e$ for each $e=e_1,e_2$ using the noises $\mathbf{z}_{1,e},\mathbf{z}_{2,e},\mathbf{z}_{3,e}$ sampled uniformly at random, and shares it with each client $v\! \in\! [3]$.
\begin{alignat*}{2}
&{\bf y}_{1,1}^{e_1}= -\Tilde{\mathbf{h}}_{1,e_1}+2\mathbf{z}_{1,e_1}, \quad &&{\bf y}_{1,1}^{e_2}= -\Tilde{\mathbf{h}}_{1,e_2}+2\mathbf{z}_{1,e_2};\\
&{\bf y}_{1,2}^{e_1}= -2\Tilde{\mathbf{h}}_{1,e_1}+3\mathbf{z}_{1,e_1}, \quad &&{\bf y}_{1,2}^{e_2}= -2\Tilde{\mathbf{h}}_{1,e_2}+3\mathbf{z}_{1,e_2};\\
&{\bf y}_{1,3}^{e_1}= -3\Tilde{\mathbf{h}}_{1,e_1}+4\mathbf{z}_{1,e_1}, \quad &&{\bf y}_{1,3}^{e_2}= -3\Tilde{\mathbf{h}}_{1,e_2}+4\mathbf{z}_{1,e_2};\\
&{\bf y}_{2,1}^{e_1}= -\Tilde{\mathbf{h}}_{2,e_1}+2\mathbf{z}_{2,e_1}, \quad &&{\bf y}_{2,1}^{e_2}= -\Tilde{\mathbf{h}}_{2,e_2}+2\mathbf{z}_{2,e_2};\\
&{\bf y}_{2,2}^{e_1}= -2\Tilde{\mathbf{h}}_{2,e_1}+3\mathbf{z}_{2,e_1}, \quad &&{\bf y}_{2,2}^{e_2}= -2\Tilde{\mathbf{h}}_{2,e_2}+3\mathbf{z}_{2,e_2};\\
&{\bf y}_{2,3}^{e_1}= -3\Tilde{\mathbf{h}}_{2,e_1}+4\mathbf{z}_{2,e_1}, \quad &&{\bf y}_{2,3}^{e_2}= -3\Tilde{\mathbf{h}}_{2,e_2}+4\mathbf{z}_{2,e_2};\\
&{\bf y}_{3,1}^{e_1}= -\Tilde{\mathbf{h}}_{3,e_1}+2\mathbf{z}_{3,e_1}, \quad &&{\bf y}_{3,1}^{e_2}= -\Tilde{\mathbf{h}}_{3,e_2}+2\mathbf{z}_{3,e_2};\\
&{\bf y}_{3,2}^{e_1}= -2\Tilde{\mathbf{h}}_{3,e_1}+3\mathbf{z}_{3,e_1}, \quad &&{\bf y}_{3,2}^{e_2}= -2\Tilde{\mathbf{h}}_{3,e_2}+3\mathbf{z}_{3,e_2};\\
&{\bf y}_{3,3}^{e_1}= -3\Tilde{\mathbf{h}}_{3,e_1}+4\mathbf{z}_{3,e_1}, \quad &&{\bf y}_{3,3}^{e_2}= -3\Tilde{\mathbf{h}}_{3,e_2}+4\mathbf{z}_{3,e_2},
\vspace{-0.5cm}
\end{alignat*}

Then, each client $v \in [3]$ aggregates the received masked shares from all $3$ clients to obtain
\begin{align*}
\mathbf{y}_{1}=({\bf y}^{e_1}_{1,1} +{\bf y}^{e_1}_{2,1} + {\bf y}^{e_1}_{3,1}, \;  {\bf y}^{e_2}_{1,1} +{\bf y}^{e_2}_{2,1} + {\bf y}^{e_2}_{3,1}), \\
\mathbf{y}_{2}=({\bf y}^{e_1}_{1,2} +{\bf y}^{e_1}_{2,2} + {\bf y}^{e_1}_{3,2}, \;  {\bf y}^{e_2}_{1,2} +{\bf y}^{e_2}_{2,2} + {\bf y}^{e_2}_{3,2}), \\
\mathbf{y}_{3}=({\bf y}^{e_1}_{1,3} +{\bf y}^{e_1}_{2,3} + {\bf y}^{e_1}_{3,3}, \;  {\bf y}^{e_2}_{1,3} +{\bf y}^{e_2}_{2,3} + {\bf y}^{e_2}_{3,3}).
\end{align*}

% \begin{eqnarray}
% \mathbf{y}_{v}=\Big(\sum\limits_{n=1}^{3}\widetilde{\mathbf{z}}_{n,v}^{e_1},\quad \sum\limits_{n=1}^{3}\widetilde{\mathbf{z}}_{n,v}^{e_2}\Big). \label{local store}
% \end{eqnarray}

%\begin{eqnarray}{rCl}
%\sum\limits_{n=1}^{3}\widetilde{\mathbf{z}}_{n,1}^{e_m}&=&-\sum\limits_{n=1}^{3}\Tilde{\mathbf{h}}_{n,e_m}+2\sum\limits_{n=1}^{3}\mathbf{z}_{n,e_m},\\
%\sum\limits_{n=1}^{3}\widetilde{\mathbf{z}}_{n,2}^{e_m}&=&-2\sum\limits_{n=1}^{3}\Tilde{\mathbf{h}}_{n,e_m}+3\sum\limits_{n=1}^{3}\mathbf{z}_{n,e_m},\\
%\sum\limits_{n=1}^{3}\widetilde{\mathbf{z}}_{n,3}^{e_m}&=&-3\sum\limits_{n=1}^{3}\Tilde{\mathbf{h}}_{n,e_m}+4\sum\limits_{n=1}^{3}\mathbf{z}_{n,e_m}.
%\end{eqnarray}

\noindent\textbf{Private Embedding Aggregation Retrieval.}
We explain how client $1$ privately retrieves its intended global embedding $({\mathbf{h}}_{1,e_1}+{\mathbf{h}}_{3,e_1})/2$ without revealing the entity $e_1$ and similar for others.
Client $1$ samples $2$ random noises $z_1$ and $z_2$ uniformly, and sends coded query $\mathbf{q}_{v}$ to client $v\in[3]$, given by
\begin{IEEEeqnarray*}{c}
\mathbf{q}_{1}\!\!=\!\!(\!-1\!+\!2z_1,2z_2\!), \mathbf{q}_{2}\!\!=\!\!(\!-2\!+\!3z_1,3z_2), \mathbf{q}_{3}\!\!=\!\!(\!-3\!+\!4z_1,4z_2).
\end{IEEEeqnarray*}

Having received the query $\mathbf{q}_{v}$, client $v \in [3]$ computes the inner products $A_{v}=\langle\mathbf{q}_{v},\mathbf{y}_{v}\rangle$ as responses, and sends $\Tilde{A}_v=\texttt{PHE.Enc}({A}_v,pk_1)$ to the server. 

Server samples a random noise $r$ and encrypts locally generated random noises ${\bf s}_1$ and ${\bf s}_2$ to obtain $\Tilde{\bf s}_1=\texttt{PHE.Enc}(\mathbf{s}_1,pk_1),\Tilde{\bf s}_2=\texttt{PHE.Enc}(\mathbf{s}_2,pk_1)$, and then  computes the results $\Tilde{Y}_1,\Tilde{Y}_2,\Tilde{Y}_3$ for client 1. 
% a noise term to $A_{v}$ and forwards the result $Y_{v}$ to client $1$, which is given by
% \begin{align*}
% &Y_{1}\!=\!(\!-1\!+\!2z_1)({\bf y}^{e_1}_{1,1} \!+\!{\bf y}^{e_1}_{2,1} \!\!+\! {\bf y}^{e_1}_{3,1})\!+\!2z_2({\bf y}^{e_2}_{1,1} \!+\!{\bf y}^{e_2}_{2,1} \!+\! {\bf y}^{e_2}_{3,1})\! \\
% &+\!3\mathbf{s}_1, \;
% Y_{2}\!=\!(\!-2\!+\!3z_1)({\bf y}^{e_1}_{1,2} \!+\!{\bf y}^{e_1}_{2,2} \!+\! {\bf y}^{e_1}_{3,2})\!+\!\!3z_2({\bf y}^{e_2}_{1,2} \!+\!{\bf y}^{e_2}_{2,2} \!\\
% &+\! {\bf y}^{e_2}_{3,2})\!+\!3\mathbf{s}_2, \;
% Y_{3}\!=\!(\!-3\!+\!4z_1)({\bf y}^{e_1}_{1,3} \!
% +\!{\bf y}^{e_1}_{2,3} \!+\! {\bf y}^{e_1}_{3,3})\!+\!4z_2({\bf y}^{e_2}_{1,3} \!\\
% &+\!{\bf y}^{e_2}_{2,3} \!+\! {\bf y}^{e_2}_{3,3})-6\mathbf{s}_1+8\mathbf{s}_2.
% \end{align*}
\begin{align*}
\Tilde{Y}_{1} = (\Tilde{A}_1 )^r (\Tilde{\mathbf{s}}_1)^3, \;
\Tilde{Y}_{2} = (\Tilde{A}_2)^r  (\Tilde{\mathbf{s}}_2)^3, \;
\Tilde{Y}_{3} = (\Tilde{A}_3)^r (\Tilde{\mathbf{s}}_1)^{-6}(\Tilde{\mathbf{s}}_2)^8.
\end{align*}
% \begin{eqnarray*}{rCl}
% Y_{1}&=&(-1+2z_1)\sum\limits_{n=1}^{3}\widetilde{\mathbf{z}}_{n,1}^{e_1}+2z_2\sum\limits_{n=1}^{3}\widetilde{\mathbf{z}}_{n,1}^{e_2}+\mathbf{z}^1, \\
% Y_{2}&=&(-2+3z_1)\sum\limits_{n=1}^{3}\widetilde{\mathbf{z}}_{n,2}^{e_1}+3z_2\sum\limits_{n=1}^{3}\widetilde{\mathbf{z}}_{n,2}^{e_2}+\mathbf{z}^2, \\
% Y_{3}&=&(-3+4z_1)\sum\limits_{n=1}^{3}\widetilde{\mathbf{z}}_{n,3}^{e_1}+4z_2\sum\limits_{n=1}^{3}\widetilde{\mathbf{z}}_{n,3}^{e_2}-2\mathbf{z}^1+\frac{8}{3}\mathbf{z}^2.
% \end{eqnarray*}
%Note that adding $\mathbf{s}_1$ and $\mathbf{s}_2$ at the server prevents client $1$ from learning any additional information about ${\bf h}_{2,e_2}$. 
Client 1 decrypts the results and gets the responses $Y_1, Y_2, Y_3$.
\begin{align*}
Y_{1} = rA_1 + 3\mathbf{s}_1, \;
Y_{2} = rA_2 + 3\mathbf{s}_2, \;
Y_{3} = rA_3-6\mathbf{s}_1+8\mathbf{s}_2.
\end{align*}

Finally, with $Y_1,Y_2,Y_3$, client $1$ computes $6Y_1-8Y_2+3Y_3=(r(\mathbf{h}_{1,e_1}\!+\mathbf{h}_{3,e_1}),2r),$
and obtains the global embedding of its entity $e_1$ as $\frac{r(\mathbf{h}_{1,e_1}+\mathbf{h}_{3,e_1})}{2r}=\frac{\mathbf{h}_{1,e_1}+\mathbf{h}_{3,e_1}}{2}$.
% \begin{align*}
% \Tilde{\mathbf{h}}_{1,e_1}\!+\!\Tilde{\mathbf{h}}_{2,e_1}\!+\!\Tilde{\mathbf{h}}_{3,e_1}\!=\!(\mathbf{h}_{1,e_1}\!+\!\mathbf{h}_{3,e_1},2)=6Y_1\!-\!8Y_2\!+\!3Y_3,
% \end{align*}
% and obtains the global embedding of its entity $e_1$ as $\frac{\mathbf{h}_{1,e_1}+\mathbf{h}_{3,e_1}}{2}$. 

\begin{Remark} Through this example, the advantage of \scheme over PSI-based protocols can be easily seen. As the intersection of 3 clients' entity sets is $\varnothing$, PSI-based protocol would miss the opportunity to collaborate. In general, \scheme is superior to PSI-based protocols in 1) \scheme builds upon entity union, which captures all possible aggregation opportunities across clients, while PSI only allows within the intersection of all clients' entity sets; 2) \scheme achieves a higher level of entity privacy. While PSI still reveals the intersection of entity sets to clients, local entity set are kept completely private in \scheme.
\end{Remark}
\section{Theoretical Analysis}

We theoretically demonstrate the performance and privacy guarantees provided by the proposed \scheme protocol and analyze its operational complexities.

\subsection{Performance and Privacy Guarantees}
%We formally describe the privacy guarantees of the \scheme protocol in the following theorem.

\begin{theorem}\label{privacy theorem}
%Consider a distributed system of a central server and $N$ clients. 
The proposed \scheme protocol for general federated representation learning tasks can leverage all potential aggregation opportunities among all the clients and is $T$-private for any $T< {N}/{2}$, i.e., it simultaneously achieves entity privacy and embedding privacy against 1) the curious server in a computational sense, and 2) any subset of up to $T$ colluding clients in a statistical sense (except for the union of all entity sets).
\end{theorem}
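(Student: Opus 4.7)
The plan is to decompose Theorem~\ref{privacy theorem} into three independent claims and handle each by isolating the primitive responsible for it: (i) \emph{correctness}, i.e., every potential aggregation opportunity is captured; (ii) \emph{statistical $T$-privacy} against any coalition of up to $T$ colluding clients; and (iii) \emph{computational privacy} against a curious server. The first claim is essentially the chain of identities \eqref{evaluating:1}--\eqref{rnoise}. What needs to be checked is that the response polynomial $Y_{n,e}(x)$ in \eqref{answer:polynomial} has degree at most $2(K+T-1)$, so by the choice \eqref{partition number} one has $2(K+T-1)<N$ and the $N$ returned evaluations $\{Y_{n,e}(\alpha_v)\}_{v\in[N]}$ uniquely determine $Y_{n,e}(x)$ via Lagrange interpolation on distinct nodes; evaluating at $\beta_k$ and dividing the trailing coordinate into the leading $d$ coordinates yields \eqref{eq:task}, automatically summing over every client owning $e$ rather than over a pre-agreed intersection, which establishes that all aggregation opportunities are captured.

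For (ii), I would fix a coalition $\mathcal{T}\subseteq[N]$ with $|\mathcal{T}|\le T$ and argue separately about the sharing messages and the queries. Each sharing polynomial $\varphi_{n,e_m}(x)$ in \eqref{encoding:data} has degree $K+T-1$ with $T$ i.i.d.\ uniform noise evaluations at $\beta_{K+1},\ldots,\beta_{K+T}$; the standard Lagrange/Shamir argument, which amounts to inverting a $T\times T$ Vandermonde block, then shows that the $|\mathcal{T}|\le T$ evaluations $\{\varphi_{n,e_m}(\alpha_v)\}_{v\in\mathcal{T}}$ are jointly uniform on $\mathbb{F}^{|\mathcal T|(d+1)/K}$ and independent of both $\mathbbm{1}(e_m\in\mathcal{E}_n)$ and $\mathbf{h}_{n,e_m}$. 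The same argument applied to the query polynomials $\rho_{n,e}^{m}(x)$, which have identical degree and noise structure by \eqref{symmetric:22}, yields that the queries observed by $\mathcal{T}$ are independent of the intended entity $e$. The only remaining channel for a colluder is its own correct retrieval; there the count $\sum_v\mathbbm{1}(e\in\mathcal{E}_v)$ is multiplied by the server-chosen $r_{n,e}\in\mathbb{F}$, uniform and nonzero with overwhelming probability, so a statistical swap hides the count (matching the corner-case exception in the footnote to the Entity-privacy definition).

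For (iii), I would observe that every message traversing the server is either (a) one-time-pad encrypted with $\textup{PRG}(a_{n,v})$ or $\textup{PRG}(a'_{n,v})$, whose seeds are established by Diffie--Hellman and unknown to the server, or (b) Paillier-encrypted under a client's public key $pk_n$, whose secret $sk_n$ the server never obtains under the non-collusion clause of the threat model. A short hybrid argument replaces each PRG output by a truly uniform pad and each Paillier ciphertext by an encryption of zero; the server's view in the final hybrid is manifestly independent of all entity sets and local embeddings, and indistinguishability between consecutive hybrids reduces to the pseudorandomness of the PRG, the hardness of the Diffie--Hellman problem, and the semantic security of Paillier encryption under the decisional composite residuosity assumption.

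The step I expect to be the main obstacle is the joint analysis at the interface between correctness and entity privacy from a colluder's viewpoint. Concretely, the server-injected masks $r_{n,e}$ and $\psi_{n,e}(x)$ must be shown simultaneously (a) to cancel from the ratio used in \eqref{rnoise} so correctness is unaffected, and (b) to statistically decorrelate the recovered pair (scaled numerator, scaled denominator) from the raw cardinality $\sum_v\mathbbm{1}(e\in\mathcal{E}_v)$. The cleanest route I see is to factor $r_{n,e}$ as a scalar out of the first term of \eqref{answer:polynomial}, then invoke \eqref{symmetric:1345} to note that $\psi_{n,e}(x)$ vanishes on $\{\beta_k\}_{k\in[K]}$ and thus preserves the zero/secret pattern used in the correctness chain, and finally use the uniformity of $r_{n,e}$ over $\mathbb{F}$ to argue that, conditional on the correctly reconstructed average, the recovered pair is statistically indistinguishable across any two admissible cardinalities. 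Discharging this step rigorously, together with its interplay with the $T$-privacy argument of the sharing and query polynomials, is where I would plan the most care.
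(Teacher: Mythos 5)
Your decomposition is essentially the paper's own: correctness via the degree bound $2(K+T-1)<N$ from \eqref{partition number} and the chain \eqref{evaluating:1}--\eqref{rnoise}; computational privacy against the server via one-time pads, Diffie--Hellman seeds and the semantic security of \texttt{PHE}; and client-side privacy via the $T$ noise evaluations in \eqref{encoding:data} and \eqref{symmetric:22} plus the server-injected $\psi_{n,e}(x)$ and $r_{n,e}$. The step you flag as the ``main obstacle'' is exactly how the paper resolves it in its appendix lemma: the $N$ responses are re-parameterized (by interpolation of the degree-$2(K+T-1)$ polynomial \eqref{answer:polynomial}) into $K+2T-1$ evaluations at $\alpha$-points, each masked by an independent uniform noise from \eqref{symmetric:1345}, together with the $K$ evaluations at $\beta$-points, which equal the $r_{n,e}$-scaled numerator and denominator of \eqref{rnoise}; conditional mutual information with the colluders' side information is then shown to vanish by the chain rule. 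One small correction to your plan: you do not need ``$r_{n,e}$ nonzero with overwhelming probability'' for privacy --- since the requesting client owns $e$, the count $\sum_{v}\mathbbm{1}(e\in\mathcal{E}_v)$ is nonzero, so $r_{n,e}\sum_{v}\mathbbm{1}(e\in\mathcal{E}_v)$ is exactly uniform over $\mathbb{F}$ for uniform $r_{n,e}$, giving perfect (not merely statistical) hiding of the count given the average.

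The one genuine omission is the one-time private entity union phase. The theorem covers the whole \scheme protocol, and every client learns $\mathcal{E}=\bigcup_n\mathcal{E}_n$ through a subroutine whose execution could itself leak which client contributed which entity; your proof plan never argues about this phase. The paper discharges it by invoking the statistical privacy guarantee of the cited private-set-union protocol, and this is in fact the source of the ``statistical'' (rather than information-theoretic) qualifier in the client-side claim --- the sharing and retrieval phases you analyze are perfectly private in the information-theoretic sense. You should add a sentence importing that guarantee (or proving it, if you intend a self-contained argument), and correspondingly attribute the statistical relaxation to that phase rather than to the choice of $r_{n,e}$.
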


\begin{proof}
We know from \eqref{rnoise} that each client, for each of its local entities, can accurately obtain the average embedding across all clients who have this entity locally. Thus our \scheme protocol leverages all potential aggregation opportunities among all clients. Moreover, in \scheme, all messages received by the server are masked by pseudo-random noises using one-time-pad encryption or are encrypted using $\texttt{PHE}$, and hence the server learns nothing about the entities and embeddings, due to the computational security of one-time-pad encryption and $\texttt{PHE}$.
Moreover, in the phases of private embedding sharing and private embedding aggregation retrieval, all shared data between clients are masked by $T$ random noises, which 
admit entity privacy and embedding privacy against any subset of up to $T < {N}/{2}$ colluding clients. See Appendix for details.
\end{proof}

\subsection{Complexity Analysis}\label{complexity analysis}

As the private entity union operation 
%and the Diffie-Hellman key exchange protocol 
is performed only once before the training rounds, we expect a negligible contribution of its complexity to the overhead of the \scheme protocol.

We next analyze the complexities of secure embedding sharing and private embedding aggregation retrieval, which are operations carried out in each training round of the \scheme protocol. We note that the queries in \eqref{query:1} and the \texttt{PHE} noise terms $\texttt{PHE.Enc}(\psi_{n,e}(\alpha_{v}),pk_n)$ in \eqref{answers:add} are constructed \emph{independently} of the entity embeddings, and thus can be computed and stored \emph{offline} before each round starts. These offline storage and computation costs are analyzed as follows.
% a priori during off-peak hours to reduce the latency of system computation. The offline storage cost and computation cost are analysed as follows.

\noindent\textbf{Offline Storage Cost.} The offline storage contains the queries in \eqref{query:1} and the encrypted noise terms $\texttt{PHE.Enc}({\psi}_{n,e}(\alpha_{v}))$ in \eqref{answers:add} that are both used in private embedding aggregation retrieval. Each client $n$ generates a query vector $\mathbf{q}_{n,v,e}$ of length $M$ sent to each client $v\in[N]$ for each entity $e\in\mathcal{E}_n$. Thus, the storage cost at client $n$ is $O(MN|\mathcal{E}_n|)$ over the finite field $\mathbb{F}$, where $|\mathcal{ E}_n|$ denotes the cardinality of $\mathcal{ E}_n$.
For each client $n$, the server can independently generate the encrypted noise  $\texttt{PHE.Enc}(\psi_{n,e}(\alpha_v))$ of dimension $\frac{d+1}{K}$ for each $e\in\mathcal{E}_n$ and  $n,v\in[N]$. Hence,
% the storage cost at the server is $O(\frac{dN|\mathcal{E}_n|}{K})$ for client $n$, and accordingly 
the total offline storage cost at server is $O(\frac{dN\sum_{n=1}^{N}|\mathcal{E}_n|}{K})$ over the ciphertext space $\mathbb{Q}$, where $\mathbb{Q}$ is the maximum \texttt{PHE} ciphertext space over the $N$ clients.

%Accordingly, the total offline storage cost at the federated system is $O(MN\sum_{n=1}^{N}|\mathcal{E}_n|+\frac{dN\sum_{n=1}^{N}|\mathcal{E}_n|}{K})$.

\noindent\textbf{Offline Computation Cost.} The offline computation includes generating queries at each client and encrypted noise terms at the server. The queries $\{\rho_{n,e}^{m}(\alpha_v)\}_{v\in[N]}$  \eqref{query:1} at client $n$ are generated by evaluating the polynomial $\rho_{n,e}^{m}(x)$ of degree $K+T-1$ at $N$ points, for each $m\in[M]$ and $e\in\mathcal{E}_n$. This can be done with complexity $O(MN(\log N)^2|\mathcal{E}_n|)$ \cite{Von}.
For the encrypted noise terms \eqref{answers:add}, the server first generates the noise terms $\{\psi_{n,e}(\alpha_{v})\}_{v\in[N]}$ of dimension $\frac{d+1}{K}$  by evaluating the polynomial $\psi_{n,e}(x)$ of degree $2(K+T-1)<N$ at $N$ points and then encrypts these evaluations, for all $n\in[N]$ and $e\in\mathcal{E}_n$. The former for polynomial evaluations yields a complexity of $O(\frac{dN(\log N)^2\sum_{n=1}^{N}|\mathcal{E}_n|}{K})$. We know that both the encryption and decryption of $\texttt{PHE}$ can be achieved within a complexity of $(\log Q)^3$ where $Q=|\mathbb{Q}|$. Thus the latter for encrypting these evaluations incurs a complexity of $O(\frac{dN(\log Q)^3\sum_{n=1}^{N}|\mathcal{E}_n|}{K})$. 
So, the total offline computational complexity at the server is
$O(\frac{dN(\log Q)^3\sum_{n=1}^{N}|\mathcal{E}_n|}{K})$.

%We continue to analyze the online communication and computation cost of \scheme, which involves operations carried out \emph{after} updating the embeddings via local training.

\noindent\textbf{Online Communication Cost.} The online communication overhead at each client $n$ consists of three parts: 1) sending the one-time-pad encrypted share $\Tilde{\mathbf{y}}_{n,v}$ \eqref{stored data} of dimension $\frac{M(d+1)}{K}$ to each client $v\in[N]$; 2) sending the one-time-pad encrypted query $\Tilde{\mathbf{q}}_{n,v,e}$ \eqref{query:1} of dimension $M$ to each client $v\in[N]$, for each entity $e\in\mathcal{E}_n$; 3) responding the $\texttt{PHE}$ answer $\Tilde{A}_{n,v,e}$  %\eqref{server:response} 
of dimension $\frac{d+1}{K}$ to client $v$ for each $v\in[N]$ and $e\in\mathcal{E}_v$.
The incurred communication overhead of client $n$ for these three parts are $O(\frac{dMN}{K})$ and $O(MN|\mathcal{E}_n|)$ in $\mathbb{F}$, and $O(\frac{d\sum_{v=1}^{N}|\mathcal{E}_v|}{K})$ in $\mathbb{Q}$, respectively. Thus, the total online communication overhead at client $n$ is $O(\frac{dMN}{K}\log|\mathbb{F}|+MN|\mathcal{E}_n|\log|\mathbb{F}|+\frac{d\sum_{n=1}^{N}|\mathcal{E}_n|}{K}\log Q$).

\noindent\textbf{Online Computation Cost.} The online computational overhead at client $n$ contains four parts: 
1) generating the encoded data $\{\varphi_{n,e_m}(\alpha_v)\}_{v\in[N]}$ sent to the $N$ clients for all $m\in[M]$. This can be viewed as evaluating the polynomial $\varphi_{v,e_m}(x)$ in \eqref{encoding:data} of degree $K+T-1<N$ at $N$ points for $\frac{d+1}{K}$ times for each $m\in[M]$, and thus achieves a complexity $O(\frac{dMN(\log N)^2)}{K})$ \cite{Von}; 
2) generating the answer $A_{n,v,e}$ 
%in \eqref{server:response}
to client $v$ by computing a linear combination of two vectors of dimension $M$ for $\frac{d+1}{K}$ times for each $v\in[N]$ and each $e\in\mathcal{E}_v$, which incurs a complexity of $O(\frac{dM\sum_{v=1}^{N}|\mathcal{E}_v|}{K})$;
%Thus, $\Tilde{A}_{v,n,e}$, which is the $\texttt{PHE}$ encrypted answer of size $\frac{d+1}{K}$ for each entity $e$ for all clients, incurs $\frac{d\log^3(Q_n)\sum_{n=1}^N |\mathcal{E}_n|}{K}$; 
3) encrypting the answer ${A}_{n,v,e}$ of dimension $\frac{d+1}{K}$ to client $v$ for each $v\in[N]$ and $e\in\mathcal{E}_v$ using \texttt{PHE}, and decrypting $N$ \texttt{PHE} responses of each dimension $\frac{d+1}{K}$ for each $e\in\mathcal{E}_n$, which  incur the complexities $O(\frac{d(\log Q)^3\sum_{v=1}^{N}|\mathcal{E}_v|}{K})$ and $O(\frac{dN(\log Q)^3|\mathcal{E}_n|}{K})$, respectively;
and 4) decoding the embedding aggregation of entity $e$  with dimension $\frac{d+1}{K}$ by first interpolating
%executing an $(N, 2(K+T-1)+1)$ Reed-Solomon decoder to obtain 
$Y_{n,e}(x)$ of degree $2(K+T-1)<N$, and then evaluating it at $K$ points, which yields a computational complexity of $O(\frac{dN(\log N)^2|\mathcal{E}_n|}{K})$ for all entities in $\mathcal{E}_n$.  %Repeating this operation for all entities in $\mathcal{E}_v$ yields a computational complexity of $O(\frac{dN(\log N)^2|\mathcal{E}_n|}{K})$ \cite{Gao}. 
%To summarize, the overall online computation overhead at client $n$ is dominated by $O(\frac{dN(\log Q)^3|\mathcal{E}_n|}{K}+\frac{d(\log Q)^3\sum_{n=1}^N |\mathcal{E}_n|}{K})$.
% The online computational complexity at the server replies on the following two main components by \eqref{answers:add}: 1) computing the response $\Tilde{A}_{n,v,e}$ of dimension $\frac{d+1}{K}$ to the power of $r_{n,e}<Q$ for each $n,v\in[N]$ and $e\in\mathcal{E}_n$, which achieves a complexity of $O(\frac{dNQ\log Q\sum_{n=1}^{N}|\mathcal{E}_n|}{K})$;
% 2) encrypting the mask $\psi_{n,e}(\alpha_v)$ of dimension $\frac{d+1}{K}$ for each $n,v\in[N]$ and $e\in\mathcal{E}_n$, which incurs a complexity of $O(\frac{dN(\log Q)^3\sum_{n=1}^{N}|\mathcal{E}_n|}{K})$.
% Overall online computation cost at server is $O(\frac{dNQ\log Q\sum_{n=1}^{N}|\mathcal{E}_n|}{K})$.
The online computation \eqref{answers:add} at server mainly consists of raising the response $\Tilde{A}_{n,v,e}$ of dimension $\frac{d+1}{K}$ to the power of $r_{n,e}<Q$ for each $n,v\in[N]$ and $e\in\mathcal{E}_v$, which incurs a complexity of $O(\frac{dNQ\sum_{n=1}^{N}|\mathcal{E}_n|}{K})$.

\section{Conclusion}
We proposed a novel secure embedding aggregation framework \scheme for federated representation learning, which leverages all potential aggregation opportunities among all the clients while ensuring entity privacy and embedding privacy simultaneously. %It has also reached a better utility performance compared with PSI.
We theoretically demonstrated that \scheme achieves provable privacy against a curious server and a threshold number of colluding clients, and analyzed its complexities.

% \section*{Acknowledgment}
% This work was  supported in part by the National Nature Science Foundation of China (NSFC) under Grant 62106057.

\section*{acknowledgement}
The work of Jiaxiang Tang and Songze Li is in part supported by the National Nature Science Foundation of China (NSFC) Grant 62106057, Guangzhou Municipal Science and Technology Guangzhou-HKUST(GZ) Joint Project 2023A03J0151 and Project 2023A03J0011, Foshan HKUST Projects FSUST20-FYTRI04B, and Guangdong Provincial Key Lab of Integrated Communication, Sensing and Computation for Ubiquitous Internet of Things. The work of Jinbao Zhu is supported by the China Postdoctoral Science Foundation Grant 2022M720891.

\bibliographystyle{ieeetr}
\bibliography{reference.bib}

\clearpage

\appendix[Proof of Theorem 1]\label{proof}
% The proposed \scheme protocol consists of private embedding sharing and private embedding aggregation retrieval. 
After securely obtaining the set of all entities on all clients using  private entity union, in each training round, the clients aggregate their embeddings for each of the entities, using private embedding sharing and private embedding aggregation retrieval, such that no entity or embedding information about individual clients is leaked to the server or any subset of up to $T < {N}/{2}$ colluding clients. 

The entity privacy in the private entity union is provided by \cite[Lemma 1]{PSU} in a statistical sense.
Next, we move on to prove that entity privacy and embedding privacy are preserved, in the phases of private embedding sharing and private embedding aggregation retrieval. Over the course of these two phases, all shares of embeddings received by the server are masked by pseudo-random noises using one-time-pad encryption or are encrypted via $\texttt{PHE}$, and hence the server learns nothing about the embeddings and entities. 
%The only thing that server learns is the total number of distinct entities in the system, but server can not link the counts to the exact entities, as the individual entity set are preserved.
% Recall from Remark \ref{server privacy} that, all the data received by the server from the clients are masked by pseudo-random noises that are unknown to the server. Thus, the server learns nothing about the entities and embedding from the received data, i.e., the entity and embedding privacy are guaranteed against the curious server. 
Finally, we show that, in the information-theoretic sense, the phases of private embedding sharing and private embedding aggregation retrieval admit entity privacy and embedding privacy against any subset of up to $T <{N}/{2}$ colluding clients, other than the global embeddings of the entities owned by these $T$ colluding clients.
% the average global aggregations of desired embeddings and the local embeddings and entities owned by these $T$ colluding clients. 
This is made precise in the following lemma. Overall, based on the statistical security of entity privacy and information-theoretic security of embedding privacy, we achieve the statistical security against colluding clients.

\begin{lemma}
For any subset of clients $\mathcal{T}\subseteq[N]$ of size $T < {N}/{2}$, we have
\begin{IEEEeqnarray}{l}
I\bigg(\{\mathcal{E}_n\}_{n\in[N]\backslash\mathcal{T}},\{\mathbf{h}_{n,e}:e\in\mathcal{E}_n\}_{n\in[N]\backslash\mathcal{T}};\mathbf{R}_{\mathcal{T}}\notag\\
\quad\Big|\big\{\frac{\sum_{v\in[N]}\mathbbm{1}(e\in\mathcal{E}_{v})\cdot\mathbf{h}_{v,e}}{\sum_{v\in[N]}\mathbbm{1}(e\in\mathcal{E}_{v})},\mathcal{E}_n,\mathbf{h}_{n,e}:e\in\mathcal{E}_n\big\}_{n\in\mathcal{T}}\bigg)=0, \notag
\end{IEEEeqnarray}
where $\mathbf{R}_{\mathcal{T}}$ denotes all messages received by these $T$ colluding clients in the phases of private embedding sharing and private embedding aggregation retrieval.
%where $\mathbf{Y}_{\mathcal{T}}$ denotes the data owned by any $T$ colluding clients.
\end{lemma}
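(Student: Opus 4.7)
The plan is to enumerate the three types of messages contained in $\mathbf{R}_{\mathcal{T}}$ and show each is statistically independent of the honest clients' private data conditioned on the allowed side information. Concretely, $\mathbf{R}_{\mathcal{T}}$ consists of: (i) the shares $\{\varphi_{n,e_m}(\alpha_v): n\in[N]\setminus\mathcal{T},\, v\in\mathcal{T},\, m\in[M]\}$ received during private embedding sharing; (ii) the queries $\{\rho_{n,e}^m(\alpha_v): n\in[N]\setminus\mathcal{T},\, v\in\mathcal{T},\, e\in\mathcal{E}_n,\, m\in[M]\}$ received during private embedding aggregation retrieval; and (iii) the $N$ decrypted responses $\{Y_{v,n,e}: n\in\mathcal{T},\, v\in[N],\, e\in\mathcal{E}_n\}$ that each colluding client itself obtains for its own entities. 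Because the server is honest under this branch of the threat model, the one-time-pad and \texttt{PHE} ciphertexts that merely transit through it add no further information to $\mathcal{T}$.

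For (i) and (ii), I would invoke the standard Lagrange secret-sharing argument. For each honest $n\in[N]\setminus\mathcal{T}$ and each $m\in[M]$, the $T$-tuple $(\varphi_{n,e_m}(\alpha_v))_{v\in\mathcal{T}}$ equals $\Lambda_{\mathcal{T}}\,\Tilde{\mathbf{h}}_{n,e_m}+\Gamma_{\mathcal{T}}\,\mathbf{z}_{n,e_m}$, where $\Gamma_{\mathcal{T}}$ is the $T\times T$ Lagrange matrix mapping the padding nodes $\{\beta_{K+1},\ldots,\beta_{K+T}\}$ to the evaluation nodes $\{\alpha_v\}_{v\in\mathcal{T}}$. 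Pairwise distinctness of $\{\alpha_v,\beta_k\}$ makes $\Gamma_{\mathcal{T}}$ invertible, so the independent uniform noise $\mathbf{z}_{n,e_m}$ renders the share tuple uniform on $\mathbb{F}^T$ and independent of $\Tilde{\mathbf{h}}_{n,e_m}$. Replacing $\varphi_{n,e_m}$ by $\rho_{n,e}^m$ yields the same conclusion for (ii): the queries the colluding clients see are uniform and independent of the indicator pattern $(\mathbbm{1}(e_m=e))_m$, so they carry no information about which entity $e$ the honest client is retrieving.

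For (iii), each colluding client $n$ interpolates the degree-$2(K+T-1)$ polynomial $Y_{n,e}(x)=r_{n,e}F_{n,e}(x)+\psi_{n,e}(x)$ with $F_{n,e}(x)=\sum_m \rho_{n,e}^m(x)\sum_{v'}\varphi_{v',e_m}(x)$. Evaluations at $\beta_1,\ldots,\beta_K$ return exactly $r_{n,e}\bigl(\sum_v\mathbbm{1}(e\in\mathcal{E}_v)\mathbf{h}_{v,e},\,\sum_v\mathbbm{1}(e\in\mathcal{E}_v)\bigr)$; taking the ratio yields the global embedding already in the conditioning set, while the uniform multiplicative mask $r_{n,e}\in\mathbb{F}$ erases the denominator and hence any information about which honest clients own $e$. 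All other coefficients of $Y_{n,e}(x)$ are masked by the $\widetilde{K}=K+2T-1$ independent uniform server noises $\{\mathbf{z}_{n,e}^k\}$ defining $\psi_{n,e}$, and because $K+\widetilde{K}$ matches the dimension $2(K+T-1)+1$ of the degree-$2(K+T-1)$ polynomial space, the map from these noises to the non-useful coefficients is a bijection, making $Y_{n,e}$ uniform over its affine preimage given the allowed values. Combining the three parts via the chain rule, and using mutual independence of the fresh random pads across $(n,m,e,v)$ and across retrieval rounds, collapses the conditional mutual information to zero. I expect the main obstacle to be the bookkeeping in (iii): carefully tracking both the multiplicative mask $r_{n,e}$ and the noises $\{\mathbf{z}_{n,e}^k\}$, and verifying that they jointly surject onto the coefficients of $Y_{n,e}(x)$ that are not pinned down by the $K$ allowed evaluations, which ultimately rests on the pairwise distinctness of $\{\alpha_k,\beta_k\}$ together with the degree matching $2(K+T-1)<N$.
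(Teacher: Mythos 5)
Your proposal is correct and follows essentially the same route as the paper's proof: the same decomposition of $\mathbf{R}_{\mathcal{T}}$ into shares, queries, and responses, the same $T$-noise Lagrange-sharing argument for the first two, and the same change of basis for the responses (the $K$ evaluations at the $\beta_k$'s, which reduce to the conditioned-on average plus the $r_{n,e}$-masked occurrence count, together with the $\widetilde{K}=K+2T-1$ remaining degrees of freedom masked bijectively by the server noises $\{\mathbf{z}_{n,e}^{k}\}$). The paper simply makes your ``bijection onto the non-useful coefficients'' step explicit by evaluating $Y_{n,e}(x)$ at the points $\{\beta_k\}_{k\in[K]}\cup\{\alpha_k\}_{k\in[\widetilde{K}]}$, where $\psi_{n,e}(\alpha_k)=\mathbf{z}_{n,e}^{k}$ directly.
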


\begin{proof}
%The lemma is proved as follows.
%In our SFEA protocol, the data $\mathbf{Y}_{\mathcal{T}}$ owned by any $T$ colluding clients includes the local updated embedding $\{\mathbf{h}_{n,e}         :e\in\mathcal{E}_n\}_{n\in\mathcal{T}}$, the data $\{\mathbf{y}_{v,n}\}_{v\in[N],n\in\mathcal{T}}=\{(\varphi_{v,e_1}(\alpha_n),\ldots,\varphi_{v,e_M}(\alpha_n))\}_{v\in[N],n\in\mathcal{T}}$ received by clients $\mathcal{T}$ from clients $[N]$ in the phase of secure encoded embedding sharing,
%and the query $\{\mathbf{q}_{v,n,e}                                                                                                                           :e\in\mathcal{E}_n\}_{v\in[N],n\in\mathcal{T}}=\{(\rho_{v,e}^{1}(\alpha_n),\ldots,\rho_{v,e}^{M}(\alpha_n)):e\in\mathcal{E}_n\}_{v\in[N],n\in\mathcal{T}}$ and
%the response $\{Y_{v,n,e}                                                                                                                                     :e\in\mathcal{E}_n\}_{v\in[N],n\in\mathcal{T}}=\{\langle\mathbf{q}_{n,v,e}, \mathbf{y}_{v}\rangle+\psi_{n,e}(\alpha_{v}):e\in\mathcal{E}_n\}_{v\in[N],n\in\mathcal{T}}$ received by clients $\mathcal{T}$ from clients $[N]$ for finishing the aggregation of the entities $\mathcal{E}_n$ in the phase of private aggregation retrieval.
In our \scheme protocol, $\mathbf{R}_{\mathcal{T}}$ includes the shared data $\{\mathbf{y}_{v,n}\}_{v\in[N],n\in\mathcal{T}}$  \eqref{stored data} received by clients $\mathcal{T}$ from clients $[N]$ during the phase of private embedding sharing, and the queries  $\{\mathbf{q}_{v,n,e}\}_{v\in[N],n\in\mathcal{T},e\in\mathcal{E}_v}$ in \eqref{query:1} and the responses $\{Y_{v,n,e}:e\in\mathcal{E}_n\}_{v\in[N],n\in\mathcal{T}}$ in \eqref{answers:add} received by clients $\mathcal{T}$ from clients $[N]$, in the phase of private embedding aggregation retrieval.
%In our SFEA protocol, the data $\mathbf{Y}_{\mathcal{T}}$ owned by any $T$ colluding clients includes the data $\{\mathbf{y}_{v,n}\}_{v\in[N],n\in\mathcal{T}}=\{(\varphi_{v,e_1}(\alpha_n),\ldots,\varphi_{v,e_M}(\alpha_n))\}_{v\in[N],n\in\mathcal{T}}$ received by clients $\mathcal{T}$ from clients $[N]$ in the phase of secure encoded embedding sharing,
%and the query $\{\mathbf{q}_{v,n,e}                                                                                                                           :e\in\mathcal{E}_n\}_{v\in[N],n\in\mathcal{T}}=\{(\rho_{v,e}^{1}(\alpha_n),\ldots,\rho_{v,e}^{M}(\alpha_n)):e\in\mathcal{E}_n\}_{v\in[N],n\in\mathcal{T}}$ and
%the response $\{Y_{v,n,e}                                                                                                                                     :e\in\mathcal{E}_n\}_{v\in[N],n\in\mathcal{T}}=\{\langle\mathbf{q}_{n,v,e}, \mathbf{y}_{v}\rangle+\psi_{n,e}(\alpha_{v}):e\in\mathcal{E}_n\}_{v\in[N],n\in\mathcal{T}}$ received by clients $\mathcal{T}$ from clients $[N]$ for finishing the aggregation of the entities $\mathcal{E}_n$ in the phase of private aggregation retrieval.
Thus, we have
\begin{IEEEeqnarray}{rCl}
0&\leq&I\big(\{\mathcal{E}_n\}_{n\in[N]\backslash\mathcal{T}},\{\mathbf{h}_{n,e}:e\in\mathcal{E}_n\}_{n\in[N]\backslash\mathcal{T}};\notag\\
&&\quad\mathbf{R}_{\mathcal{T}}\Big|\big\{\frac{\sum_{v\in[N]}\mathbbm{1}(e\in\mathcal{E}_{v})\cdot\mathbf{h}_{v,e}}{\sum_{v\in[N]}\mathbbm{1}(e\in\mathcal{E}_{v})},\mathcal{E}_n,\mathbf{h}_{n,e}:e\in\mathcal{E}_n\big\}_{n\in\mathcal{T}}) \notag\\
&=&I\big(\{\mathcal{E}_n\}_{n\in[N]\backslash\mathcal{T}},\{\mathbf{h}_{n,e}\!:\!e\!\in\!\mathcal{E}_n\}_{n\in[N]\backslash\mathcal{T}};\{\mathbf{y}_{v,n}\}_{v\in[N],n\in\mathcal{T}},\notag\\
&&\{\mathbf{q}_{v,n,e}\}_{v\in[N],n\in\mathcal{T},e\in\mathcal{E}_v},\{Y_{v,n,e}:e\in\mathcal{E}_n\}_{v\in[N],n\in\mathcal{T}} \notag\\
&&\quad\quad\Big|\big\{\frac{\sum_{v\in[N]}\mathbbm{1}(e\in\mathcal{E}_{v})\cdot\mathbf{h}_{v,e}}{\sum_{v\in[N]}\mathbbm{1}(e\in\mathcal{E}_{v})},\mathcal{E}_n,\mathbf{h}_{n,e}:e\in\mathcal{E}_n\big\}_{n\in\mathcal{T}}\big) \notag\\
&=&I\big(\{\mathcal{E}_n\}_{n\in[N]\backslash\mathcal{T}},\{\mathbf{h}_{n,e}:e\in\mathcal{E}_n\}_{n\in[N]\backslash\mathcal{T}};\{\mathbf{y}_{v,n}\}_{v\in[N],n\in\mathcal{T}}\notag\\
&&\quad\quad\Big|\big\{\frac{\sum_{v\in[N]}\mathbbm{1}(e\in\mathcal{E}_{v})\cdot\mathbf{h}_{v,e}}{\sum_{v\in[N]}\mathbbm{1}(e\in\mathcal{E}_{v})},\mathcal{E}_n,\mathbf{h}_{n,e}:e\in\mathcal{E}_n\big\}_{n\in\mathcal{T}}\big) \notag\\
&&+I\big(\{\mathcal{E}_n\}_{n\in[N]\backslash\mathcal{T}},\{\mathbf{h}_{n,e}:e\in\mathcal{E}_n\}_{n\in[N]\backslash\mathcal{T}};\notag\\
&&\quad\{\mathbf{q}_{v,n,e}\}_{v\in[N],n\in\mathcal{T},e\in\mathcal{E}_v},
\{Y_{v,n,e}:e\in\mathcal{E}_n\}_{v\in[N],n\in\mathcal{T}}\notag\\
&&\quad\quad\Big|\{\mathbf{y}_{v,n}\}_{v\in[N],n\in\mathcal{T}},\notag\\
&&\quad\quad\big\{\frac{\sum_{v\in[N]}\mathbbm{1}(e\in\mathcal{E}_{v})\cdot\mathbf{h}_{v,e}}{\sum_{v\in[N]}\mathbbm{1}(e\in\mathcal{E}_{v})},\mathcal{E}_n,\mathbf{h}_{n,e}:e\in\mathcal{E}_n\big\}_{n\in\mathcal{T}}\big) \notag\\
% &\overset{(a)}{=}&I\big(\{\mathcal{E}_n\}_{n\in[N]\backslash\mathcal{T}},\{\mathbf{h}_{n,e}:e\in\mathcal{E}_n\}_{n\in[N]\backslash\mathcal{T}};\notag\\
% &&\quad\{\mathbf{q}_{v,n,e}\}_{v\in[N],n\in\mathcal{T},e\in\mathcal{E}_v},\{Y_{v,n,e}:e\in\mathcal{E}_n\}_{v\in[N],n\in\mathcal{T}}\notag\\
% &&\quad\quad\quad |\big\{\sum\limits_{v\in[N]}\Tilde{\mathbf{h}}_{v,e}:e\in\mathcal{E}_n\big\}_{n\in\mathcal{T}},\{\mathbf{y}_{v,n}\}_{v\in[N],n\in\mathcal{T}}\big) \notag \\
&\overset{(a)}{=}&I\big(\{\mathcal{E}_n\}_{n\in[N]\backslash\mathcal{T}},\{\mathbf{h}_{n,e}:e\in\mathcal{E}_n\}_{n\in[N]\backslash\mathcal{T}};\notag\\
&&\quad\{\mathbf{q}_{v,n,e}\}_{v\in[N],n\in\mathcal{T},e\in\mathcal{E}_v}\Big|\{\mathbf{y}_{v,n}\}_{v\in[N],n\in\mathcal{T}},\notag\\
&&\quad\quad\big\{\frac{\sum_{v\in[N]}\mathbbm{1}(e\in\mathcal{E}_{v})\cdot\mathbf{h}_{v,e}}{\sum_{v\in[N]}\mathbbm{1}(e\in\mathcal{E}_{v})},\mathcal{E}_n,\mathbf{h}_{n,e}:e\in\mathcal{E}_n\big\}_{n\in\mathcal{T}}\big)\notag\\
&&+I\big(\{\mathcal{E}_n\}_{n\in[N]\backslash\mathcal{T}},\{\mathbf{h}_{n,e}:e\in\mathcal{E}_n\}_{n\in[N]\backslash\mathcal{T}};\notag\\
&&\quad\{Y_{v,n,e}:e\in\mathcal{E}_n\}_{v\in[N],n\in\mathcal{T}}\notag\\
&&\quad\quad\Big|\{\mathbf{q}_{v,n,e}\}_{v\in[N],n\in\mathcal{T},e\in\mathcal{E}_v},\{\mathbf{y}_{v,n}\}_{v\in[N],n\in\mathcal{T}},\notag\\
&&\quad\quad\big\{\frac{\sum_{v\in[N]}\mathbbm{1}(e\in\mathcal{E}_{v})\cdot\mathbf{h}_{v,e}}{\sum_{v\in[N]}\mathbbm{1}(e\in\mathcal{E}_{v})},\mathcal{E}_n,\mathbf{h}_{n,e}:e\in\mathcal{E}_n\big\}_{n\in\mathcal{T}}\big)\notag\\
&\overset{(b)}{=}&I\big(\{\mathcal{E}_n\}_{n\in[N]\backslash\mathcal{T}},\{\mathbf{h}_{n,e}:e\in\mathcal{E}_n\}_{n\in[N]\backslash\mathcal{T}};\notag\\
&&\quad\{Y_{v,n,e}:e\in\mathcal{E}_n\}_{v\in[N],n\in\mathcal{T}}\notag\\
&&\quad\quad\Big|\{\mathbf{q}_{v,n,e}\}_{v\in[N],n\in\mathcal{T},e\in\mathcal{E}_v},\{\mathbf{y}_{v,n}\}_{v\in[N],n\in\mathcal{T}},\notag\\
&&\quad\quad\big\{\frac{\sum_{v\in[N]}\mathbbm{1}(e\in\mathcal{E}_{v})\cdot\mathbf{h}_{v,e}}{\sum_{v\in[N]}\mathbbm{1}(e\in\mathcal{E}_{v})},\mathcal{E}_n,\mathbf{h}_{n,e}:e\in\mathcal{E}_n\big\}_{n\in\mathcal{T}}\big)\notag\\
&\overset{(c)}{=}&I\big(\{\mathcal{E}_n\}_{n\in[N]\backslash\mathcal{T}},\{\mathbf{h}_{n,e}:e\in\mathcal{E}_n\}_{n\in[N]\backslash\mathcal{T}};\notag\\
&&\big\{Y_{n,e}(x)\!:\!x\in\{\beta_k\}_{k\in[K]}\!\cup\!\{\alpha_k\}_{k\in[K+2T-1]}\!:\!e\in\mathcal{E}_n\big\}_{n\in\mathcal{T}}\notag\\
&&\quad\quad\Big|\{\mathbf{q}_{v,n,e}\}_{v\in[N],n\in\mathcal{T},e\in\mathcal{E}_v},\{\mathbf{y}_{v,n}\}_{v\in[N],n\in\mathcal{T}},\notag\\
&&\quad\quad\big\{\frac{\sum_{v\in[N]}\mathbbm{1}(e\in\mathcal{E}_{v})\cdot\mathbf{h}_{v,e}}{\sum_{v\in[N]}\mathbbm{1}(e\in\mathcal{E}_{v})},\mathcal{E}_n,\mathbf{h}_{n,e}:e\in\mathcal{E}_n\big\}_{n\in\mathcal{T}}\big)\notag\\
&\overset{(d)}{=}&I\big(\{\mathcal{E}_n\}_{n\in[N]\backslash\mathcal{T}},\{\mathbf{h}_{n,e}:e\in\mathcal{E}_n\}_{n\in[N]\backslash\mathcal{T}};\notag\\
&&\big\{\{\Lambda_{n,e}(\alpha_k)+\mathbf{z}_{n,e}^{k}\}_{k\in[K+2T-1]},\{r_{n,e}\sum\limits_{v\in[N]}\Tilde{\mathbf{h}}_{v,e}^{k}\}_{k\in[K]}:\notag\\
&&\quad e\in\mathcal{E}_n\big\}_{n\in\mathcal{T}}\Big|\{\mathbf{q}_{v,n,e}\}_{v\in[N],n\in\mathcal{T},e\in\mathcal{E}_v},\{\mathbf{y}_{v,n}\}_{v\in[N],n\in\mathcal{T}},\notag\\
&&\quad\quad\big\{\frac{\sum_{v\in[N]}\mathbbm{1}(e\in\mathcal{E}_{v})\cdot\mathbf{h}_{v,e}}{\sum_{v\in[N]}\mathbbm{1}(e\in\mathcal{E}_{v})},\mathcal{E}_n,\mathbf{h}_{n,e}:e\in\mathcal{E}_n\big\}_{n\in\mathcal{T}}\big)\notag\\
&\overset{(e)}{=}&I\big(\{\mathcal{E}_n\}_{n\in[N]\backslash\mathcal{T}},\{\mathbf{h}_{n,e}:e\in\mathcal{E}_n\}_{n\in[N]\backslash\mathcal{T}};\notag\\
&&\quad\big\{\{\Lambda_{n,e}(\alpha_k)+\mathbf{z}_{n,e}^{k}\}_{k\in[K+2T-1]}: e\in\mathcal{E}_n\big\}_{n\in\mathcal{T}},\notag\\
&&\big\{r_{n,e}\!{\sum_{v\in[N]}\!\!\mathbbm{1}(e\!\in\!\mathcal{E}_{v})\!\cdot\!\mathbf{h}_{v,e}},r_{n,e}\!{\sum_{v\in[N]}\!\!\mathbbm{1}(e\!\in\!\mathcal{E}_{v})}\!:\!e\in\mathcal{E}_n\big\}_{n\in\mathcal{T}}\notag\\
&&\Big|\{\mathbf{q}_{v,n,e}\}_{v\in[N],n\in\mathcal{T},e\in\mathcal{E}_v},\{\mathbf{y}_{v,n}\}_{v\in[N],n\in\mathcal{T}},\notag\\
&&\quad\quad\big\{\frac{\sum_{v\in[N]}\mathbbm{1}(e\in\mathcal{E}_{v})\cdot\mathbf{h}_{v,e}}{\sum_{v\in[N]}\mathbbm{1}(e\in\mathcal{E}_{v})},\mathcal{E}_n,\mathbf{h}_{n,e}:e\in\mathcal{E}_n\big\}_{n\in\mathcal{T}}\big)\notag\\
&{=}&I\big(\{\mathcal{E}_n\}_{n\in[N]\backslash\mathcal{T}},\{\mathbf{h}_{n,e}:e\in\mathcal{E}_n\}_{n\in[N]\backslash\mathcal{T}};\notag\\
&&\quad\big\{\{\Lambda_{n,e}(\alpha_k)+\mathbf{z}_{n,e}^{k}\}_{k\in[K+2T-1]}: e\in\mathcal{E}_n\big\}_{n\in\mathcal{T}},\notag\\
&&\big\{\frac{\sum_{v\in[N]}\!\mathbbm{1}(e\!\in\!\mathcal{E}_{v})\!\cdot\!\mathbf{h}_{v,e}}{\sum_{v\in[N]}\mathbbm{1}(e\!\in\!\mathcal{E}_{v})},r_{n,e}\!{\sum_{v\in[N]}\!\mathbbm{1}(e\!\in\!\mathcal{E}_{v})}:e\!\in\!\mathcal{E}_n\big\}_{n\in\mathcal{T}}\notag\\
&&\Big|\{\mathbf{q}_{v,n,e}\}_{v\in[N],n\in\mathcal{T},e\in\mathcal{E}_v},\{\mathbf{y}_{v,n}\}_{v\in[N],n\in\mathcal{T}},\notag\\
&&\quad\big\{\frac{\sum_{v\in[N]}\mathbbm{1}(e\in\mathcal{E}_{v})\cdot\mathbf{h}_{v,e}}{\sum_{v\in[N]}\mathbbm{1}(e\in\mathcal{E}_{v})},\mathcal{E}_n,\mathbf{h}_{n,e}:e\in\mathcal{E}_n\big\}_{n\in\mathcal{T}}\big)\notag\\
&{=}&I\big(\{\mathcal{E}_n\}_{n\in[N]\backslash\mathcal{T}},\{\mathbf{h}_{n,e}:e\in\mathcal{E}_n\}_{n\in[N]\backslash\mathcal{T}};\notag\\
&&\quad\big\{\{\Lambda_{n,e}(\alpha_k)+\mathbf{z}_{n,e}^{k}\}_{k\in[K+2T-1]}: e\in\mathcal{E}_n\big\}_{n\in\mathcal{T}},\notag\\
&&\big\{r_{n,e}{\sum_{v\in[N]}\mathbbm{1}(e\in\mathcal{E}_{v})}:e\in\mathcal{E}_n\big\}_{n\in\mathcal{T}}\notag\\
&&\Big|\{\mathbf{q}_{v,n,e}\}_{v\in[N],n\in\mathcal{T},e\in\mathcal{E}_v},\{\mathbf{y}_{v,n}\}_{v\in[N],n\in\mathcal{T}},\notag\\
&&\;\big\{\frac{\sum_{v\in[N]}\mathbbm{1}(e\in\mathcal{E}_{v})\cdot\mathbf{h}_{v,e}}{\sum_{v\in[N]}\mathbbm{1}(e\in\mathcal{E}_{v})},\mathcal{E}_n,\mathbf{h}_{n,e}:e\in\mathcal{E}_n\big\}_{n\in\mathcal{T}}\big)\label{terms}\\
&\overset{(f)}{=}&0.\notag
\end{IEEEeqnarray}
Here $(a)$ is because $\{\mathbf{y}_{v,n}\}_{v\in[N],n\in\mathcal{T}}=\{\varphi_{v,e_m}(\alpha_n):m\in[M],v\in[N],n\in\mathcal{T}\}$ by \eqref{stored data} and the data $\{\varphi_{v,e_m}(\alpha_n)\}_{n\in\mathcal{T}}$ received by the clients $\mathcal{T}$ are protected by $T$ independent and uniform random noises $\mathbf{z}^{K+1}_{v,e_m},\ldots,\mathbf{z}^{K+T}_{v,e_m}$ for all $m\in[M],v\in[N]$ by \eqref{encoding:data}, such that
$\{\mathbf{y}_{v,n}\}_{v\in[N],n\in\mathcal{T}}$ are independent of $\{\mathcal{E}_n\}_{n\in[N]\backslash\mathcal{T}}$, $\{\mathbf{h}_{n,e}:e\in\mathcal{E}_n\}_{n\in[N]\backslash\mathcal{T}}$ and $\big\{\frac{\sum_{v\in[N]}\mathbbm{1}(e\in\mathcal{E}_{v})\cdot\mathbf{h}_{v,e}}{\sum_{v\in[N]}\mathbbm{1}(e\in\mathcal{E}_{v})},\mathcal{E}_n,\mathbf{h}_{n,e}:e\in\mathcal{E}_n\big\}_{n\in\mathcal{T}}$. Thus 
\begin{IEEEeqnarray}{l}
I\big(\{\mathcal{E}_n\}_{n\in[N]\backslash\mathcal{T}},\{\mathbf{h}_{n,e}:e\in\mathcal{E}_n\}_{n\in[N]\backslash\mathcal{T}};\{\mathbf{y}_{v,n}\}_{v\in[N],n\in\mathcal{T}}\notag\\
\quad\quad\Big|\big\{\frac{\sum_{v\in[N]}\mathbbm{1}(e\in\mathcal{E}_{v})\cdot\mathbf{h}_{v,e}}{\sum_{v\in[N]}\mathbbm{1}(e\in\mathcal{E}_{v})},\mathcal{E}_n,\mathbf{h}_{n,e}:e\in\mathcal{E}_n\big\}_{n\in\mathcal{T}}\big)=0.\notag
\end{IEEEeqnarray}
% \begin{eqnarray}
% I\Big(\{\mathcal{E}_n\}_{n\in[N]\backslash\mathcal{T}},\{\mathbf{h}_{n,e}:e\in\mathcal{E}_n\}_{n\in[N]\backslash\mathcal{T}}; \{\mathbf{y}_{v,n}\}_{v\in[N],n\in\mathcal{T}} \notag\\
% \quad\quad |\big\{\sum\limits_{v\in[N]}\Tilde{\mathbf{h}}_{v,e}:e\in\mathcal{E}_n\big\}_{n\in\mathcal{T}}\Big)=0. \notag
% \end{eqnarray}
The step $(b)$ is similar to $(a)$ because $\{\mathbf{q}_{v,n,e}\}_{v\in[N],n\in\mathcal{T},e\in\mathcal{E}_v}=\{\rho_{v,e}^{m}(\alpha_n)\}_{m\in[M],v\in[N],n\in\mathcal{T},e\in\mathcal{E}_v}$ by \eqref{query:1} and
the queries $\{\rho_{v,e}^{m}(\alpha_n)\}_{n\in\mathcal{T}}$ received by the clients $\mathcal{T}$ are protected by $T$ independent and uniform random noises $z_{v,e}^{m,K+1},\ldots,z_{v,e}^{m,K+T}$ for all $m\in[M],v\in[N]$ and $e\in\mathcal{E}_v$ by \eqref{symmetric:22}, such that
\begin{IEEEeqnarray}{l}
   I\big(\{\mathcal{E}_n\}_{n\in[N]\backslash\mathcal{T}},\{\mathbf{h}_{n,e}:e\in\mathcal{E}_n\}_{n\in[N]\backslash\mathcal{T}};\notag\\
\quad\{\mathbf{q}_{v,n,e}\}_{v\in[N],n\in\mathcal{T},e\in\mathcal{E}_v}\Big|\{\mathbf{y}_{v,n}\}_{v\in[N],n\in\mathcal{T}},\notag\\
\quad\quad\big\{\frac{\sum_{v\in[N]}\mathbbm{1}(e\in\mathcal{E}_{v})\cdot\mathbf{h}_{v,e}}{\sum_{v\in[N]}\mathbbm{1}(e\in\mathcal{E}_{v})},\mathcal{E}_n,\mathbf{h}_{n,e}:e\in\mathcal{E}_n\big\}_{n\in\mathcal{T}}\big)=0. \notag
\end{IEEEeqnarray}
% \begin{eqnarray}
% I\Big(\{\mathcal{E}_n\}_{n\in[N]\backslash\mathcal{T}}\!,\!\{\mathbf{h}_{n,e}\!:\!e\in\mathcal{E}_n\}_{n\in[N]\backslash\mathcal{T}}\!;\!\{\mathbf{q}_{v,n,e}\}_{v\in[N]\!,\!n\in\mathcal{T}\!,\!e\in\mathcal{E}_v}\notag\\
%  |\big\{\sum\limits_{v\in[N]}\Tilde{\mathbf{h}}_{v,e}:e\in\mathcal{E}_n\big\}_{n\in\mathcal{T}},\{\mathbf{y}_{v,n}\}_{v\in[N],n\in\mathcal{T}}\Big)=0. \notag
% \end{eqnarray}
The step $(c)$ holds because the answer $Y_{v,n,e}$ is equivalent to evaluating $Y_{n,e}(x)$ at $x=\alpha_{v}$ for any $v\in[N]$ and $Y_{n,e}(x)$ is a polynomial of degree $2(K+T-1)$ by \eqref{answer:polynomial}, such that $\{Y_{1,n,e}, \ldots, Y_{N,n,e}\}$ and $\{Y_{n,e}(x):x\in\{\beta_k\}_{k\in[K]}\cup\{\alpha_k\}_{k\in[K+2T-1]}\}$ are determined of each other by Lagrange interpolation rules for any $e\in\mathcal{E}_n$ and $n\in\mathcal{T}$;
$(d)$ follows by \eqref{symmetric:1345}-\eqref{symmetric:1345} and \eqref{answer:polynomial}-\eqref{evaluating:1} in which $\Lambda_{n,e}(x)\triangleq r_{n,e}\sum_{m=1}^{M}\rho_{n,e}^{m}(x)\cdot \sum_{v'\in[N]}\varphi_{v',e_m}(x)$;
$(e)$ is due to \eqref{rnoise};
$(f)$ follows from the fact that $\{\{\mathbf{z}_{n,e}^{k}\}_{k\in[K+2T-1]},r_{n,e}:e\in\mathcal{E}_n\}_{n\in\mathcal{T}}$ are i.i.d. uniformly over $\mathbb{F}$ and are generated independently of all other variables in \eqref{terms}.

This completes the proof of the lemma.
\end{proof}
\end{document}